\documentclass[letterpaper, 10 pt, conference]{styles/ieeeconf}
\IEEEoverridecommandlockouts
\usepackage[utf8]{inputenc}
\usepackage[T1]{fontenc}
\usepackage[english]{babel}

\usepackage{xspace}

\usepackage{url}
\usepackage{color}
\usepackage{wrapfig}
\usepackage{placeins}
\usepackage{subcaption}
\usepackage{placeins}
\usepackage{cuted}
\usepackage{capt-of}
\newif\iffigs
\figstrue %

\usepackage[fleqn]{amsmath}
\usepackage{float}
\usepackage{setspace}
\usepackage{graphicx}
\usepackage{mathrsfs}
\usepackage{amssymb}
\usepackage{nicefrac}
\usepackage[linesnumbered,ruled,noend]{algorithm2e}

\SetCommentSty{mycommfont}
\SetKwComment{Comment}{// }{}
\SetKwInput{KwPar}{Parallelism}
\SetArgSty{textnormal}
\SetAlCapSkip{1em}

\let\oldnl\nl
\newcommand{\nonl}{\renewcommand{\nl}{\let\nl\oldnl}}
\makeatletter
\newcommand{\removelatexerror}{\let\@latex@error\@gobble}
\makeatother

\usepackage{tabularx}
\usepackage{multirow}
\usepackage{booktabs}
\usepackage{colortbl}
\usepackage{siunitx}

\usepackage{etoolbox}
\makeatletter
\patchcmd{\@makecaption}
{\scshape}
{}
{}
{}
\makeatother

\let\labelindent\relax
\usepackage[inline]{enumitem}

\usepackage{listings}
\lstnewenvironment{itemlisting}[1][]
{%
  \mbox{}
  \vspace*{-\baselineskip}
  \lstset{
    xleftmargin=\leftmargin,
    linewidth=\linewidth,
    #1
  }%
}
{}
\usepackage{multicol}
\usepackage{footmisc}
\usepackage{xcolor}

\usepackage{amsmath, amssymb, amsthm}
\theoremstyle{plain}
\newtheorem{theorem}{Theorem}
\newtheorem{lemma}{Lemma}
\newtheorem{proposition}{Proposition}
\theoremstyle{definition}

\theoremstyle{remark}

\definecolor{startgreen}{HTML}{4CB863}
\definecolor{goalred}{HTML}{EFA8A9}

\usepackage{lipsum}
\usepackage{xcolor}
\usepackage{subcaption}
\usepackage{csquotes}
\usepackage[
  maxbibnames=99,
  maxcitenames=2,
  natbib=true,
  style=numeric-comp,
  backend=biber,
  sorting=none,
  giveninits=true,
  url=false,
  doi=false,
  eprint=false,
  isbn=false,
]{biblatex}

\usepackage{xurl}

\usepackage[pdfa,colorlinks,bookmarksopen,bookmarksnumbered,allcolors=black,urlcolor=blue]{hyperref}

\usepackage[
  activate   = {true},
  protrusion = false,
  expansion  = true,
  kerning    = true,
  spacing    = true,
  tracking   = false,
  auto       = true,
  selected   = true,
  factor     = 2000,
  stretch    = 20,
  shrink     = 20,
]{microtype}

\usepackage[nameinlink,capitalise]{cleveref}
\crefname{line}{line}{lines}
\crefname{figure}{Fig.}{Figs.}
\Crefname{figure}{Fig.}{Figs.}
\crefname{equation}{Eq.}{Eqs.}
\Crefname{equation}{Eq.}{Eqs.}
\crefname{section}{Sec.}{Secs.}
\Crefname{section}{Sec.}{Secs.}
\crefname{definition}{Def.}{Defs.}
\Crefname{definition}{Def.}{Defs.}
\crefname{theorem}{Thm.}{Thms.}
\Crefname{theorem}{Thm.}{Thms.}
\crefname{lemma}{Lem.}{Lems.}
\Crefname{lemma}{Lem.}{Lems.}
\crefname{proposition}{Prop.}{Props.}
\Crefname{proposition}{Prop.}{Props.}
\crefname{algorithm}{Alg.}{Algs.}
\Crefname{algorithm}{Alg.}{Algs.}
\crefname{algocf}{Alg.}{Algs.}
\Crefname{algocf}{Alg.}{Algs.}
\crefname{AlgoLine}{Ln.}{Lns.}
\Crefname{AlgoLine}{Ln.}{Lns.}
\crefname{assumption}{Asm.}{Asms.}
\Crefname{assumption}{Asm.}{Asms.}
\crefname{subassumption}{Asm.}{Asms.}
\Crefname{subassumption}{Asm.}{Asms.}
\Crefname{problem}{Problem}{Problems}
\crefname{problem}{Problem}{Problems}

\newcommand{\simd}{\textsc{simd}\xspace}
\newcommand{\cpu}{\textsc{cpu}\xspace}
\newcommand{\gpu}{\textsc{gpu}\xspace}

\newcommand{\gpus}{\textsc{gpu}s\xspace}

\newcommand{\sbmps}{\textsc{sbmp}s\xspace}
\newcommand{\rrt}{\textsc{rrt}\xspace}
\newcommand{\rrtconnect}{\textsc{rrt}-Connect\xspace}
\newcommand{\rrtstar}{\textsc{rrt}{\scriptspace=0pt\(^*\)}\xspace}

\newcommand{\dof}{\textsc{d\scalebox{.8}{o}f}\xspace}

\newcommand{\simt}{\textsc{simt}\xspace}
\newcommand{\urdf}{\textsc{urdf}\xspace}
\newcommand{\io}{\textsc{i}/\textsc{o}\xspace}
\newcommand{\nvidia}{\textsc{nvidia}\xspace}
\newcommand{\prm}{\textsc{prm}\xspace}
\newcommand{\cbs}{\textsc{cbs}\xspace}
\newcommand{\asao}{\textit{a.s.a.o.}\@\xspace}
\newcommand{\aox}{\mbox{\textsc{ao}-x}\xspace}
\newcommand{\aorrtc}{\textsc{aorrtc}\xspace}
\newcommand{\bitstar}{\textsc{bit}{\scriptspace=0pt\(^*\)}\xspace}
\newcommand{\fmt}{\textsc{fmt}\xspace}
\newcommand{\drrt}{d\textsc{rrt}\xspace}
\newcommand{\drrtstar}{d\textsc{rrt}{\scriptspace=0pt\(^*\)}\xspace}
\newcommand{\aodrrt}{ao-d\textsc{rrt}\xspace}
\newcommand{\prrt}{\textsc{prrt}\xspace}
\newcommand{\prrtc}{p\textsc{rrtc}\xspace}
\newcommand{\quadrrt}{Quad-\textsc{rrt}\xspace}
\newcommand{\capt}{\textsc{capt}\xspace}
\newcommand{\vcc}{\textsc{vcc}\xspace}
\newcommand{\kinopax}{Kino-\textsc{pax}\xspace}
\newcommand{\vampmr}{\textsc{vamp}-\textsc{mr}\xspace}
\newcommand{\curobovtwo}{cuRoboV2\xspace}
\newcommand{\mrpop}{\mbox{\textsc{mr.~pop}}\xspace}
\newcommand{\foam}{\mbox{foam}\xspace}

\newcommand{\mbm}{\textsc{mbm}\xspace}

\newcommand{\curobo}{cuRobo\xspace}

\newcommand{\vamp}{\textsc{vamp}\xspace}

\definecolor{pRRTC_color_threads}{HTML}{328535}%

\definecolor{pRRTC_color_groups}{HTML}{ff77b4}

\newcommand{\setadd}{\mathrel{\vphantom{\leftarrow}\smash{\overset{\scriptscriptstyle+}{\leftarrow}}}}

\DeclareMathOperator*{\argmin}{argmin}

\usepackage{flushend}
\usepackage{comment}

\newif\ifanon

\ifanon
\else
\fi

\title{\LARGE \bf MR. POP: Multi-Robot Parallel Optimizing Planner for\\Almost-Surely Asymptotically Optimal Planning}

\ifanon
    \author{Anonymous Authors\(^{1}\)
    }
\else
\author{Chih H. Huang\(^{1,2}\), Roy Xing\(^{2}\), Brian Plancher\(^{2}\), and Zachary Kingston\(^{3}\)%
  \thanks{This material is based upon work supported by the National Science Foundation (under Awards 2411369 and 2236868). Any opinions, findings, conclusions, or recommendations expressed in this material are those of the authors and do not necessarily reflect those of the funding organizations.}%
  \thanks{\(^{1}\)Columbia College, Columbia University {\tt\footnotesize}}%
  \thanks{\(^{2}\)Department of Computer Science, Dartmouth College {\tt\footnotesize \{chih.gr,roy.xing.gr,brian.k.plancher\}@dartmouth.edu}}%
  \thanks{\(^{3}\)Department of Computer Science, Purdue University {\tt\footnotesize zkingston@purdue.edu}}%
}
\fi

\begin{document}
\raggedbottom
\maketitle
\thispagestyle{empty}
\pagestyle{empty}

\ifanon
\setcounter{footnote}{3}
\fi

\vspace*{-150mm}
\begin{strip}
\centering
\captionsetup{type=figure}
\subfloat[Problem \protect\textcolor{startgreen}{start} and \protect\textcolor{goalred}{goal}]{\includegraphics[width=0.24\linewidth]{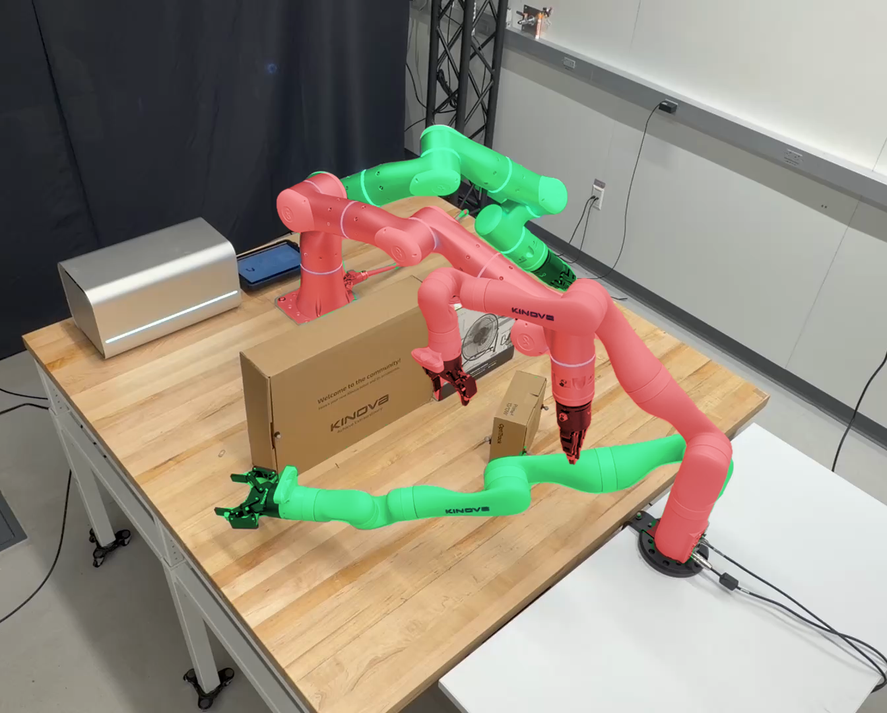}%
\label{fig:a}}
\hfil
\subfloat[\drrt]{\includegraphics[width=0.24\linewidth]{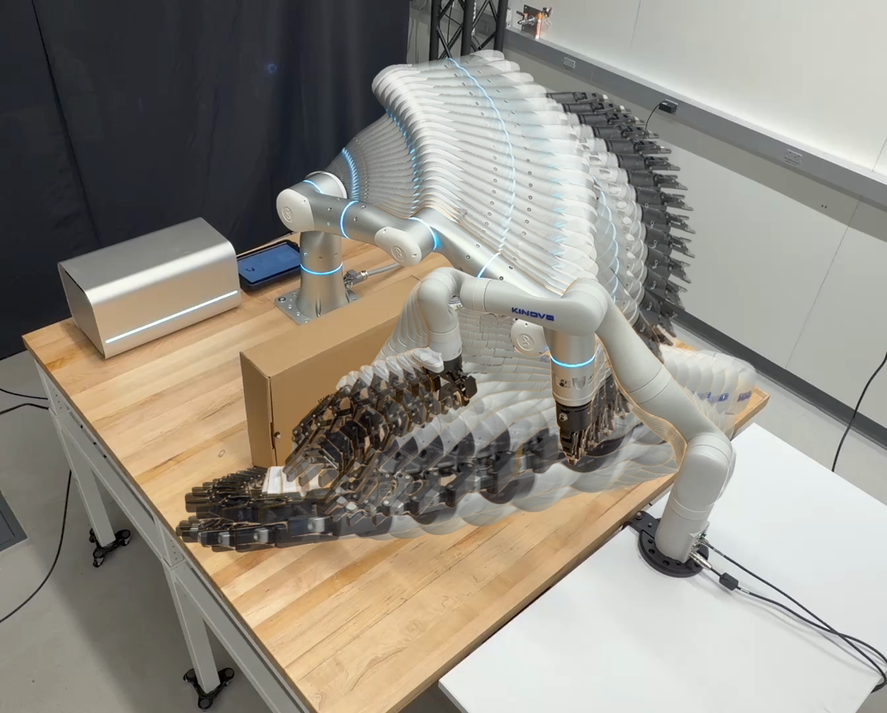}%
\label{fig:b}}
\hfil
\subfloat[\mrpop (ours)]{\includegraphics[width=0.24\linewidth]{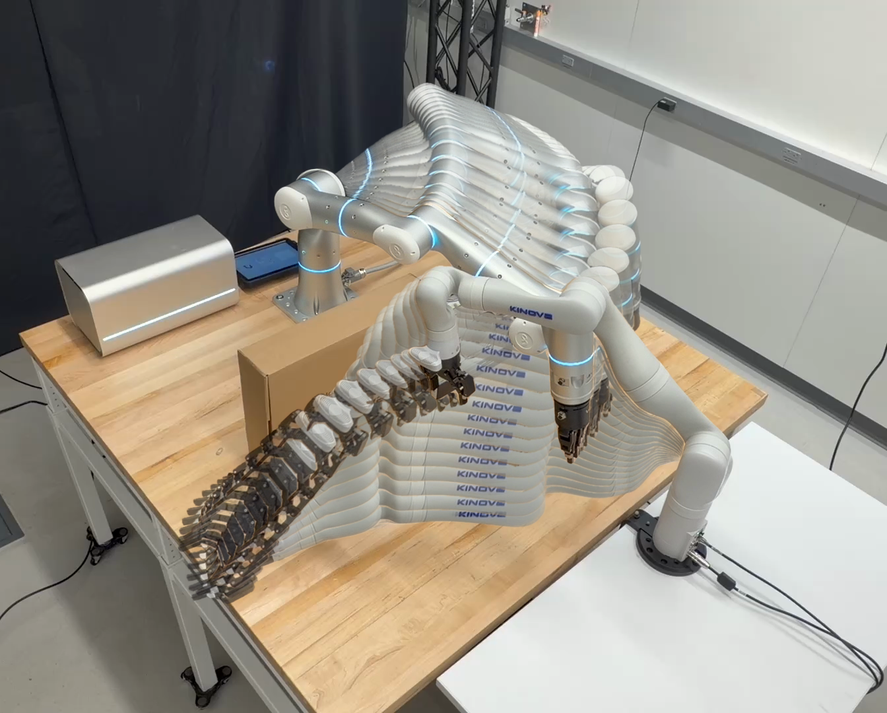}%
\label{fig:c}}
\hfil
\subfloat[\mrpop (ours) + \curobovtwo]{\includegraphics[width=0.24\linewidth]{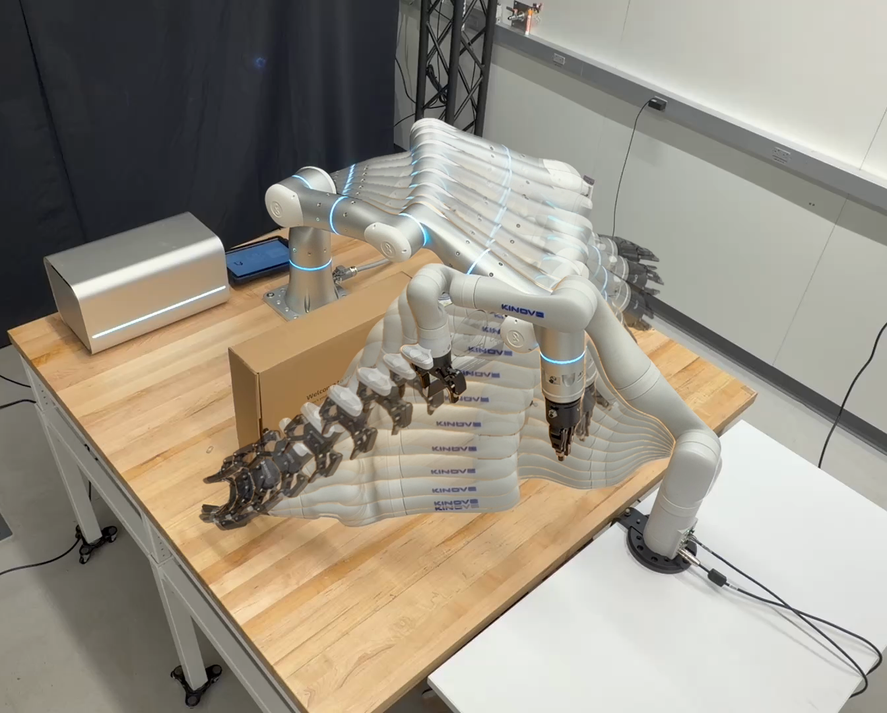}%
\label{fig:d}}
\caption{Example paths planned by \drrt, \mrpop, and \mrpop + \curobovtwo on a 14-\dof multi-robot system consisting of the 7-\dof Flexiv Rizon 4s and the 7-\dof Kinova Gen3. \mrpop leads to paths of lower cost and acts as an effective seeder for motion optimizers, raising their success rates (e.g., from 4\% to 72\%) by creating high-quality, diverse seeds that help avoid local minima.}
\label{fig:hardware_comparison}
\vspace{-10pt}
\end{strip}

\begin{abstract}

  Finding globally optimal paths remains a fundamental challenge in multi-robot motion planning. Despite acceleration of almost-surely asymptotically optimal (\asao) planners via \cpu-based parallelism, achieving both probabilistic convergence guarantees and strong computational performance, these algorithms still struggle to scale to multi-robot settings. 
As such, we introduce \mrpop, a \gpu-based \asao multi-robot planner based on \drrt and the \aox meta-algorithm. 
\mrpop uses large-scale \gpu-based \simt-parallelism to simultaneously run hundreds of roadmap construction and tree search iterations with underlying parallel nearest neighbor search and collision checking operations.
We show that this enables \mrpop to become the only planner achieving a 100\% solve rate while being faster than state-of-the-art \asao planners in multi-robot systems up to 35-\dof. \mrpop also raises the success rate of downstream motion optimizers (e.g., from 4\% to 72\%), by creating high-quality, diverse seeds that help avoid local minima.

\end{abstract}

\section{Introduction} \label{sec:intro}
Motion planning aims to find low-cost, collision-free paths through a robot's configuration space~\cite{lavalle2006planning, kavraki2016motion,orthey2023sampling}. This problem can be separated into two stages: 1) constructing an initial path and 2) optimizing it for a better solution. Sampling-based motion planners (\sbmps), such as \rrt and its bidirectional variant, \rrtconnect, are well known for their ability to efficiently find initial paths~\cite{lavalle2001rapidly, kuffner2000rrt}. 
Almost-surely asymptotically optimal (\asao) planners, e.g., \rrtstar~\cite{karaman2011sampling} and \bitstar~\cite{gammell2015batch}, attempt to also optimize paths by continuously improving solution quality until a user-set termination condition is met, which converge to the global optimum almost surely or in probability. Recent efforts have also been made to accelerate \asao planners algorithmically by utilizing the asymptotically optimal x (\aox) meta-algorithm~\cite{hauser2016asymptotically} and computationally through \cpu-based \simd parallelism (primarily for collision checking~\cite{wilson2025aorrtc}). 

At the same time, current multi-robot sampling-based planners, such as \drrt~\cite{solovey2016finding} and its anytime \asao counterpart \drrtstar~\cite{shome2020drrt}, avoid planning in the multi-robot composite space due to computational challenges, and instead decouple the problem into first finding collision-free paths for individual robots before resolving inter-robot collisions. Despite this decoupling, computational challenges still prevent \drrt from scaling to large teams of manipulators, and the additional overhead of the iterative \asao process remains a key barrier for real-time use in multi-robot settings.

As such, we introduce \textbf{\mrpop}, a \textbf{m}ulti-\textbf{r}obot \textbf{p}arallel \textbf{o}ptimizing \textbf{p}lanner based on the \aox meta-algorithm and \drrt that leverages \gpu-parallelism for efficient multi-robot \asao planning. We take advantage of the fact that modern \gpus are capable of running tens of thousands of simultaneous threads via \simt parallelism and use this scalability for running hundreds of  roadmap construction and tree search iterations simultaneously, while each iteration also runs the underlying parallelized nearest neighbor search and collision checking operations.
We evaluate \mrpop against state-of-the-art feasible and \asao planners; \mrpop is the only planner that consistently achieves a 100\% solve rate in finding initial solutions for multi-manipulator settings and shows faster convergence to optimal solutions. %
We also demonstrate the effectiveness of \mrpop as an efficient, high-quality seed for multi-robot jerk-free motion through optimizers such as \curobovtwo~\cite{sundaralingam2026curobov2} (e.g., raising the optimizer's success rate from 4\% to 72\% on multi-robot tasks). 
We also ablate the components of \mrpop in single-robot settings to demonstrate the performance of its search and \aox layers.
Finally, we deploy \mrpop on a 14 \dof dual-arm setup as depicted in Fig.~\ref{fig:hardware_comparison} for real-world validation. We release \mrpop open source: 
\ifanon
    \texttt{redacted for double blind review} 
\else
    \texttt{\url{https://github.com/CoMMALab/MR.POP}}
\fi

\section{Background and Related Work} \label{sec:related}

The \aox meta-algorithm provides existing feasible planners, which return any constraint-satisfying solution without consideration for path quality, with \asao guarantees~\cite{hauser2016asymptotically}.
Given an \(n\)-dimensional problem, it adds an additional dimension describing the cost-to-come. %
It then attempts to generate trajectories with an increasingly lower cost bound. 
In particular, \aorrtc combines the \aox meta-algorithm with the \rrtconnect planner and has shown orders of magnitude speedups in finding paths of lower sub-optimality factor~\cite{wilson2025aorrtc}. 

\subsection{Parallel Acceleration of Planning Algorithms}
Parallelism enables real-time planning performance and can be divided into three levels: high, running multiple instances of the planning algorithm simultaneously; medium, running multiple iterations of the algorithm in parallel; and low, parallelizing primitive operations (e.g., collision checks)~\cite{huang2025prrtc}.

There is a rich literature on \cpu parallel planning. For example, at the high-level, C-Forest runs several instances of the planning algorithm simultaneously and achieves superlinear speedup~\cite{otte2013c}. For mid-level parallelism, \prrt runs each \rrt iteration in independent threads. For low-level parallelism, \vamp~\cite{thomason2024motions} redesigns sampling, forward kinematics tracing, and collision checking to utilize \simd parallelism and achieves microsecond-scale planning. \capt~\cite{ramsey2024} and \vcc~\cite{chen2026vcc} further accelerate pointcloud-based collision checking by building parallelism-friendly data structures.

Recent work has also demonstrated the benefits of large-scale \gpu \simt parallelism in planning.
\quadrrt provides a 10x speedup through high-level parallelism, growing four trees in parallel~\cite{hidalgo2018quad}.
For mid-level parallelism, \kinopax expands multiple nodes simultaneously~\cite{perrault2025kino}.
For low-level parallelism, \curobo redesigns kinematics computation and collision checking to be carried out by multiple threads in parallel~\cite{sundaralingam2023curobo}.
Finally, \prrtc uses a combination of mid- and low-level parallelism to achieve 5-10x speedups over both prior \cpu- and \gpu-parallel approaches~\cite{huang2025prrtc}.

\subsection{Multi-Robot Motion Planning}
Robotic applications often require multiple manipulators to collaborate in the same workspace, necessitating simultaneous planning. One approach is to plan directly in the composite space of all robots, but this does not scale to large teams of manipulators. Decoupled and factored methods instead plan per-robot and resolve conflicts between plans~\cite{solis2021representation, guo2026efficient}. \drrt~\cite{solovey2016finding} decouples the problem by first constructing a collision-free roadmap for each individual robot, then searching over the tensor product of all roadmaps to avoid inter-robot collision. \drrt retains feasibility under the multi-robot setting, leading to the anytime \asao counterpart \drrtstar~\cite{shome2020drrt}, which repeats \drrt iterations to reach the globally optimal multi-robot solution in the limit. Attempts to speed up \drrtstar~\cite{solano2023fast} pre-compute inter-robot collisions and avoid costly rewiring.

Despite these advances, for high-\dof multi-robot systems \drrt and \drrtstar can still take up to hundreds of seconds to find initial solutions and converge to low-cost paths~\cite{shome2020drrt}. As such, we propose combining the \aox meta-algorithm with \drrt, while redesigning the algorithms to utilize \simt parallelism on the \gpu at all three levels, for real-time globally optimal multi-robot planning.

\iffigs
\begin{figure*}[t]
   \centering
   \vspace{1em}
   \includegraphics[width=1\textwidth]{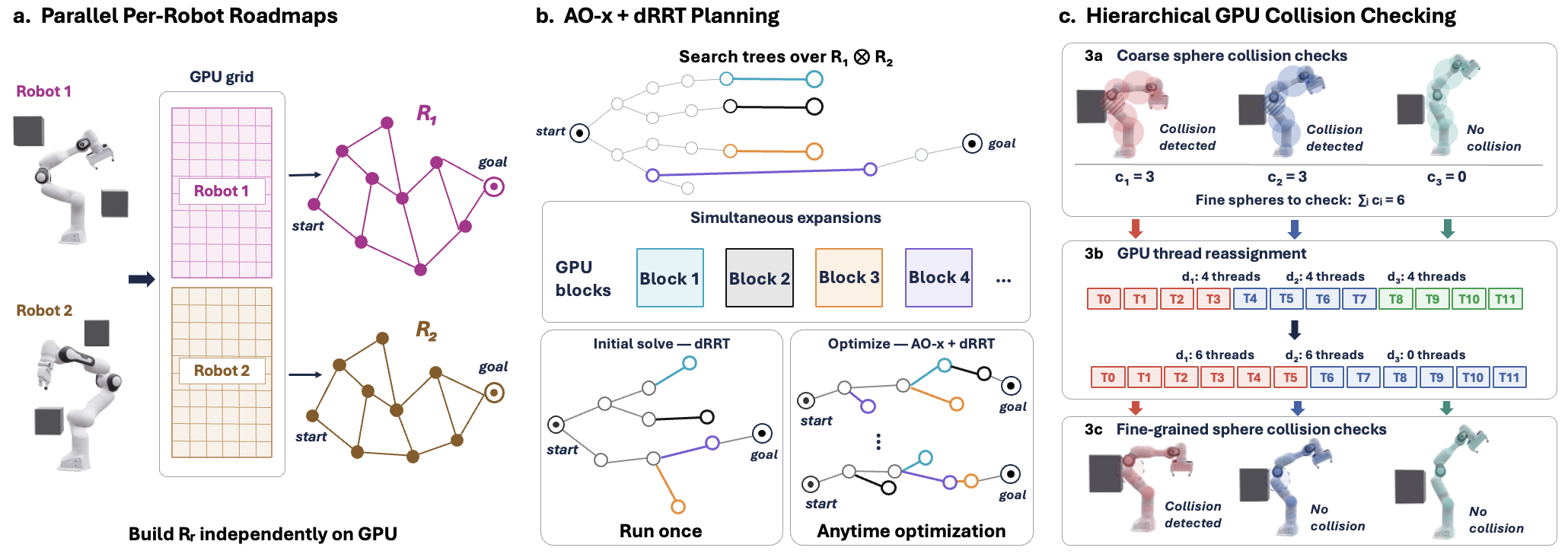}
   \vspace{-5pt}
   \caption{Overview of \mrpop. (a) The \gpu grid is divided evenly between the number of robots for roadmap construction. (b) \mrpop builds search trees over the tensor product of each individual robot's roadmap, and every \gpu block carries out its own \drrt or \aox + \drrt iteration. (c) For the primitive collision checking operations, 4 threads are assigned to each discretized motion for collision checking against the robot's coarse spherization. For fine-grained collision checking, the threads are reassigned between the discretized motions based on the coarse results and the estimated workload.}%
   \label{fig:pp_overview}
   \vspace{-15pt}
\end{figure*}
\fi

\section{MR. POP Algorithm}
\label{sec:design}
The overall structure of \mrpop is shown in~\cref{alg:mrpop} and Fig.~\ref{fig:pp_overview}. During the initial solve, the algorithm functions as a \gpu-parallelized version of bi-directional \drrt. For optimization, \mrpop searches over the roadmap tensor product by pairing bi-directional \drrt with the \aox meta-algorithm. The operations that benefit from \simt parallelism at the block level are highlighted in \textcolor{blue}{blue}, while operations parallelized at the thread level are highlighted in \textcolor{purple}{purple}. Block level parallelism is used to run hundreds of planning iterations simultaneously, as each block handles its own planning iteration. This is seen in both roadmap construction and bi-directional tree search. Thread level parallelism is used for primitive operations such as sampling, nearest neighbor search, and collision checking.

\subsection{Problem Setup}
\label{sec:setup}
Robot \(i\) has configuration space \(\mathbb{C}_i \subseteq \mathbb{R}^{d}\) and free space \(\mathbb{C}^{\mathrm{f}}_i\).
A composite configuration is \(q = (q^1, \dots, q^K)\), and the composite free space \(\mathbb{C}^{\mathrm{f}} \subseteq \mathbb{C}_1 \times \cdots \times \mathbb{C}_K\) excludes robot--robot collisions.
\mrpop plans from \(q_{\mathrm{start}}\) to \(q_{\mathrm{goal}}\) in \(\mathbb{C}^{\mathrm{f}}\).
Robot \(i\)'s roadmap \(\mathcal{R}_i = (V_i, E_i)\) is a \textsc{prm} built from \(n\) samples with connection radius \(\delta\).
The composite roadmap \(\mathbb{G}\) is the tensor product of the \(\mathcal{R}_i\), where vertices are mutually collision-free and its edges correspond to each robot either traversing one of its own edges or staying put and the motion is collision-free.
A path on \(\mathbb{G}\) is a vertex sequence \(q_0, \dots, q_M\) with cost \(C(\sigma) = \sum_j \|q_{j+1} - q_j\|\), its composite path length.
\(\underline{g}(q) = \|q - q_{\mathrm{start}}\|\) is a lower bound on the cost-to-come of \(q\), and \(c_{\mathrm{best}}\) is the cost of the best path found so far.
We write \(S \setadd e\) for adding edge \(e\) to a set \(S\).

\subsection{Roadmap Construction}
Roadmap construction, the \textsc{roadmap} subroutine of~\cref{alg:subroutines}, is used in both initial solve and path optimization. The process happens in parallel across the \gpu grid, and each robot's roadmap is generated simultaneously. Upon entering the function (Ln. 2), each block is assigned a robot id \(r\) and contributes to robot \(r\)'s roadmap. Several blocks may build one roadmap, and one of them is the leader. During setup, the leader initializes the roadmap with the given start and goal configurations of robot \(r\), while resetting the \(\mathit{connected}_r\) flag for that roadmap (Ln. 3-6).

For \(n\) iterations, each of the \(B\) blocks assigned to robot \(r\) samples a collision-free configuration (Ln. 11), searches for roadmap neighbors within radius \(\delta\) (Ln. 13), and collision-checks the edges to those neighbors for robot \(r\) (Ln. 15). If the edges are collision-free, they are added to the roadmap for robot \(r\) (Ln. 16). After \(n\) iterations, the \gpu performs a grid-wise sync (Ln. 17) so every block's writes to roadmap \(r\) reach the leading block. The leading block then checks connectivity by breadth-first search (Ln. 18-19).

\subsection{Solve for Initial Solution}
During the initial solve, \mrpop functions as a \gpu-parallel bi-directional \drrt. Each block of the \gpu grid carries out an independent bi-directional \drrt iteration, and since modern \gpus are capable of running hundreds of blocks simultaneously, \mrpop is capable of carrying out hundreds of \drrt iterations in the time of a single iteration.

At the beginning of each iteration, each block selects the smaller tree for expansion. Each iteration then follows the \drrt structure, drawing random configurations \(q_{\mathrm{rand}}\) from the composite space, searching for nearest neighbor \(q_{\mathrm{near}}\) in the tree, and selecting the new sample \(q_{\mathrm{new}}\) by querying \drrt's oracle~\cite{solovey2016finding}. Afterwards, if the edge connecting sample \(q_{\mathrm{new}}\) and  nearest neighbor \(q_{\mathrm{near}}\) is collision-free, then node \(q_{\mathrm{new}}\) and edge \(\{q_{\mathrm{new}}, q_{\mathrm{near}}\}\) are added to the tree. \mrpop then attempts to connect the collision-free sample \(q_{\mathrm{new}}\) to its nearest neighbor in the opposing tree \(q_{\mathrm{conn}}\). If the connection is collision-free, then an initial solution is found.

\begin{figure}[!t]
  \removelatexerror
  \begin{algorithm}[H]
    \caption{\mrpop}\label{alg:mrpop}
    \linespread{1.15}\small
    \DontPrintSemicolon
    \SetInd{0.5em}{0.5em}%
    \SetKwProg{func}{Function}{}{}%

    \func{\upshape\texttt{MR.\,POP}(\(q_{\mathrm{start}}, q_{\mathrm{goal}}, \delta, n\))}{
      \KwPar{High: bidirectional \drrt/\aox\\
        \textcolor{blue}{Mid: block-parallel iterations}\\
      \textcolor{purple}{Low: thread-parallel primitives}}
      \(c_{\mathrm{best}} \gets \infty\),\quad \(\sigma_{\mathrm{best}} \gets \emptyset\),\quad \(\phi \gets\) \texttt{true}\;
      \While{\(t < t_{\max}\)}{
        \(\mathcal{R} \gets\) \textcolor{blue}{\texttt{ROADMAP}}(\(q_{\mathrm{start}}, q_{\mathrm{goal}}, \phi, \delta, n\))\;
        \uIf(\Comment*[f]{initial solve}){\(\phi\)}{
          \(\sigma \gets\) \textcolor{blue}{\texttt{dRRT\_SEARCH}}(\(\mathcal{R}, q_{\mathrm{start}}, q_{\mathrm{goal}}\))\;
          \(c_{\mathrm{best}} \gets C(\sigma)\),\quad \(\sigma_{\mathrm{best}} \gets \sigma\)\;
          \(\phi \gets\) \texttt{false}\;
        }
        \Else(\Comment*[f]{optimizing}){
          \(\sigma \gets\) \textcolor{blue}{\texttt{AO-X}}(\(\mathcal{R}, c_{\mathrm{best}}\))\;
          \(c_{\mathrm{best}} \gets C(\sigma)\),\quad \(\sigma_{\mathrm{best}} \gets \sigma\)\;
        }
      }
      \Return{\(\sigma_{\mathrm{best}}\)}\;
    }{}
  \end{algorithm}%
\end{figure}

\begin{figure}[!t]
  \removelatexerror
  \begin{algorithm}[H]
    \caption{\mrpop \gpu subroutines.}\label{alg:subroutines}
    \linespread{1.15}\small
    \DontPrintSemicolon
    \SetInd{0.5em}{0.5em}%
    \SetKwProg{func}{Function}{}{}%

    \func{\upshape\textcolor{blue}{\texttt{ROADMAP}}(\(q_{\mathrm{start}}, q_{\mathrm{goal}}, \phi, \delta, n\))}{
      \(r \gets \lfloor \texttt{bid} / B \rfloor\) \Comment*[r]{robot id}
      \If(\Comment*[f]{one per robot}){\texttt{IS\_LEADER}(\texttt{bid}, \(r\))}{
        \(\mathit{connected}_r \gets\) \texttt{false}\;
        \If{\(\phi\)}{
          \(V_r \setadd q_{\mathrm{start}}\),\quad \(V_r \setadd q_{\mathrm{goal}}\)\;
        }
      }
      \texttt{GRID\_SYNC}()\;
      \While{\(\exists\, r' : \neg\, \mathit{connected}_{r'}\)}{
        \If{\(\neg\, \mathit{connected}_r\)}{
          \For{\(i \gets 1\) \KwTo \(n\)}{
            \(c \gets\) \textcolor{purple}{\texttt{SAMPLE\_FREE}}(\(r\))\;
            \(V_r \setadd c\)\;
            \(\mathcal{N} \gets\) \textcolor{purple}{\texttt{NEIGHBORS}}(\(V_r, c, \delta\))\;
            \ForEach{\(c' \in \mathcal{N}\)}{
              \If{\(\textcolor{purple}{(c, c') \in \mathbb{C}^{\mathrm{f}}_r}\)}{
                \(E_r \setadd (c, c')\)\;
              }
            }
          }
        }
        \texttt{GRID\_SYNC}() \Comment*[r]{merge block writes}
        \If{\texttt{IS\_LEADER}(\texttt{bid}, \(r\))}{
          \(\mathit{connected}_r \gets\) \texttt{BFS}(\(\mathcal{R}_r, q_{\mathrm{start}}, q_{\mathrm{goal}}\))\;
        }
        \texttt{GRID\_SYNC}()\;
      }
      \Return{\(\{\mathcal{R}_r\}\)}\;
    }{}

    \vspace{0.5em}
    \func{\upshape\textcolor{blue}{\texttt{AO-X}}(\(\mathcal{R}, c_{\mathrm{best}}\))}{
      \While(\Comment*[f]{in parallel}){\(\neg\, \mathit{solved}\)}{
        \If(\Comment*[f]{smaller tree}){\(|V_a| > |V_b|\)}{
          \texttt{SWAP}(\(T_a, T_b\))\;
        }
        \(q_{\mathrm{rand}} \gets\) \textcolor{purple}{\texttt{INFORMED\_SAMPLE}}(\(c_{\mathrm{best}}\))\;
        \(c_{\mathrm{rand}} \sim \mathcal{U}[\underline{g}(q_{\mathrm{rand}}),\ c_{\mathrm{best}}]\)\;
        \(z_{\mathrm{near}} \gets\) \textcolor{purple}{\texttt{NEAREST}}(\(V_a, q_{\mathrm{rand}}, c_{\mathrm{rand}}\))\;
        \(q_{\mathrm{new}} \gets\) \textcolor{purple}{\texttt{ORACLE\_AOX}}(\(\mathcal{R}, z_{\mathrm{near}}, q_{\mathrm{rand}}, c_{\mathrm{rand}}\))\;
        \If{\(\textcolor{purple}{(q_{\mathrm{near}}, q_{\mathrm{new}}) \in \mathbb{C}^{\mathrm{f}}}\)}{
          \(z_{\mathrm{new}} \gets (q_{\mathrm{new}},\ g_{\mathrm{near}} + \lVert q_{\mathrm{new}} - q_{\mathrm{near}} \rVert)\)\;
          \(V_a \setadd z_{\mathrm{new}}\),\quad \(E_a \setadd (z_{\mathrm{near}}, z_{\mathrm{new}})\)\;
          \(z_{\mathrm{conn}} \gets\) \textcolor{purple}{\texttt{NEAREST}}(\(V_b, q_{\mathrm{new}}, c_{\mathrm{best}} - g_{\mathrm{new}}\))\;
          \If{\(z_{\mathrm{conn}} \neq \emptyset \wedge \textcolor{purple}{(q_{\mathrm{new}}, q_{\mathrm{conn}}) \in \mathbb{C}^{\mathrm{f}}}\)}{
            \(\mathit{solved} \gets\) \texttt{true}\;
          }
        }
      }
      \Return{\texttt{EXTRACT\_PATH}(\(T_a, T_b\))}\;
    }{}

    \vspace{0.5em}
    \func{\upshape\textcolor{purple}{\texttt{NEAREST}}(\(V, q, c\))}{
      \(\mathcal{A} \gets \{ (q_v, g_v) \in V \mid g_v + \lVert q - q_v \rVert < c \}\)\;
      \Return{\(\argmin_{(q_v, g_v) \in \mathcal{A}} \lVert q - q_v \rVert\)}\;
    }{}
  \end{algorithm}%
\end{figure}

\subsection{Converge to Global Optimality}
\label{sec:convergence}
After finding an initial solution, \mrpop optimizes by pairing the \aox meta-algorithm with \drrt, the \textsc{ao-x} subroutine of~\cref{alg:subroutines}. \aox plans in \emph{state-cost space}~\cite{hauser2016asymptotically}: a tree vertex is a pair \(z = (q, g)\) of a configuration and its cost-to-come, a sample is a pair \(z_{\mathrm{rand}} = (q_{\mathrm{rand}}, c_{\mathrm{rand}})\) of a configuration and a cost budget, so finding a path cheaper than \(c_{\mathrm{best}}\) is a feasibility search in this space. As in the initial solve, every block runs its own iteration in parallel.

Each iteration draws the two coordinates of \(z_{\mathrm{rand}}\). The configuration \(q_{\mathrm{rand}}\) comes from informed sampling~\cite{gammell2014informed} (Ln. 26): given \(c_{\mathrm{best}}\), it is drawn from the prolate hyperspheroid containing every configuration that could lie on a path cheaper than \(c_{\mathrm{best}}\), which \mrpop projects into in parallel across threads. The cost coordinate \(c_{\mathrm{rand}}\) is drawn uniformly from \([\underline{g}(q_{\mathrm{rand}}), c_{\mathrm{best}}]\), the budgets under which \(q_{\mathrm{rand}}\) is reachable (Ln. 27). The nearest vertex \(z_{\mathrm{near}} = (q_{\mathrm{near}}, g_{\mathrm{near}})\) is the tree vertex closest to \(q_{\mathrm{rand}}\) among those that can reach it within the budget, \(g_{\mathrm{near}} + \|q_{\mathrm{rand}} - q_{\mathrm{near}}\| < c_{\mathrm{rand}}\), with ties between vertices at the same configuration broken toward the lowest cost (Ln. 28, 37--39). Unlike \drrt's oracle, \mrpop's considers only the \emph{admissible} neighbors \(v\) of \(q_{\mathrm{near}}\), those whose extension \((v,\ g_{\mathrm{near}} + \|v - q_{\mathrm{near}}\|)\) stays within \(c_{\mathrm{rand}}\), and among them selects the one best aligned with \(q_{\mathrm{rand}}\) (Ln. 29). This test discards only extensions that cannot have a cost-to-come cheaper than \(c_{\mathrm{rand}}\), and the random cost coordinate keeps every path cheaper than \(c_{\mathrm{best}}\) reachable with positive probability (\cref{sec:analysis}). To connect to the other tree, \aox applies the same nearest rule to the opposing tree with budget \(c_{\mathrm{best}} - g_{\mathrm{new}}\) (Ln. 33), so any collision-free connection yields an improved path.

\subsection{Multi-Thread Reassignment for Collision Checking}
Collision checking is the main computational bottleneck in sampling-based planning~\cite{bialkowski2011massively}. Within roadmap construction, \drrt, and \aox iterations, collision checking exploits \simt parallelism. Each robot's \urdf is first fed through \foam~\cite{coumar2025foam} to generate coarse and fine-grained spherical approximations of its collision geometry. To perform a collision check on an edge, \mrpop discretizes the edge into multiple motions and carries out collision checking on each checkpoint.

There are two stages to collision checking, as shown in Fig.~\ref{fig:pp_overview}. During the first stage, four threads are assigned to each discretized motion to carry out forward kinematics tracing and collision checking on the coarse representation in parallel, similar to the collision checking kernel described in~\cite{huang2025prrtc}. For the coarse representation, each robot link is approximated by a conservative sphere. Suppose there are \(t\) discretized motions \(d_1, d_2 ... d_t\), then \(4t\) threads would be carrying out the first stage collision checking process simultaneously, with threads \(t_1, t_2, t_3, t_4\) working on motion \(d_1\) and threads \(t_5, t_6, t_7, t_8\) working on motion \(d_2\), etc. During this stage, each discretized motion \(d_i\) maintains a count \(c_i\) on how many fine-grained collision spheres need to be checked in the second stage. Suppose a robot link \(x\) of discretized motion \(d_i\) is found to be in collision using the coarse representation by thread \(t_{4i}\), and in the fine-grained representation link \(x\) is approximated by \(w_x\) spheres, then thread \(t_{4i}\) would add \(w_x\) to the count \(c_i\). At the end of the first stage, \(c_1, c_2 ... c_t\) would hold the number of fine-grained spheres that need to be checked for each discretized motion during the second stage.

For the second stage, again four threads are assigned to carry out forward kinematics tracing for the fine-grained representation. Afterwards, each thread is reassigned to carry out collision checking based on the (\(c_i : total \ spheres \ to \ check\)) ratio, and each discretized motion \(d_i\)'s assigned thread count is proportional to the number of fine-grained spheres motion \(d_i\) needs to have checked. The collision sphere count \(c_i\) is then divided so that each assigned thread of motion \(d_i\) has an equal workload and the threads carry out collision checking in parallel, preventing stalling and increasing throughput. 

\subsection{Seeding for Multi-Robot Jerk-Free Motion}
In addition to providing globally optimal paths, \mrpop can be used as a seeder for producing multi-robot jerk-free motion through locally optimal optimizers such as \curobovtwo~\cite{sundaralingam2026curobov2}. \mrpop's probabilistic global optimality, through iteratively shrinking cost bounds of the \aox meta-algorithm, means that it produces diverse seeds throughout its optimization process, helping locally optimal optimizers escape local minima.
To exemplify this capability, we merged \mrpop into \curobovtwo's seeding component. During seeding,  \mrpop records the multiple paths produced through the optimization process. Then, the paths are used as seeds for \curobovtwo's jerk-free motion optimization, with paths of lower cost taking priority in the case of having more paths than the required number of seeds.

\section{Theoretical Analysis} \label{sec:analysis}
In this section we show that \mrpop is \asao
\(\mathbb{G}_n\) is the composite roadmap built from \(n\) samples per robot, and \(C^\ast_n\) is the least cost of a \(q_{\mathrm{start}}\) to \(q_{\mathrm{goal}}\) path in \(\mathbb{G}_n\).
\(C^\ast\) is the infimum of cost over paths from \(q_{\mathrm{start}}\) to \(q_{\mathrm{goal}}\) in \(\mathbb{C}^{\mathrm{f}}\).

\begin{theorem}[]
  \label{thm:asao}
  Suppose the problem is \emph{robustly optimal}~\cite{karaman2011sampling}: for each \(\epsilon > 0\), some path of cost at most \((1+\epsilon) C^\ast\) has positive clearance.
  Then \mrpop is \asao: its incumbent converges to \(C^\ast\) almost surely as the roadmap size \(n\) and the iteration count \(k\) tend to infinity.
\end{theorem}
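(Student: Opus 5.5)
The proof splits the error \(c_{\mathrm{best}} - C^\ast\) into a roadmap part and a search part:
\[
c_{\mathrm{best}} - C^\ast = (c_{\mathrm{best}} - C^\ast_n) + (C^\ast_n - C^\ast).
\]
I would show that each term vanishes almost surely in the appropriate limit. Since \(c_{\mathrm{best}}\) is nonincreasing (every \texttt{AO-X} return is strictly cheaper than the incumbent, by the budget \(c_{\mathrm{best}} - g_{\mathrm{new}}\) on Ln.~33) and bounded below by \(C^\ast\), it converges almost surely to some limit. It therefore suffices to show that, for every \(\epsilon>0\), the event \(c_{\mathrm{best}} \le (1+\epsilon)C^\ast\) eventually occurs with probability one.

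\textbf{Step 1 (roadmap term).} I would invoke the asymptotic optimality of the tensor-product roadmap from the \drrtstar analysis~\cite{shome2020drrt}, which builds on the PRM\(^\ast\) results of~\cite{karaman2011sampling}. Under robust optimality, fix a path \(\sigma_\epsilon\) with cost at most \((1+\epsilon)C^\ast\) and clearance \(\rho>0\). Cover \(\sigma_\epsilon\) by balls of radius proportional to \(\rho\) and \(\delta\). Because each robot's \textsc{prm} draws i.i.d.\ uniform samples, each ball in every robot's space eventually contains a sample almost surely. The composite vertices formed from these samples are mutually collision-free, and the corresponding composite edges stay inside the \(\rho\)-tube. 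The result is a path in \(\mathbb{G}_n\) of cost at most \((1+2\epsilon)C^\ast\), so \(\limsup_n C^\ast_n \le C^\ast\) almost surely.

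I would note that the theorem lets \(n\to\infty\) with \(\delta\) either shrinking at the PRM\(^\ast\) rate or fixed but large enough relative to \(\rho\). I would also have to check that the repeated \texttt{ROADMAP} calls only add vertices (for \(\phi=\) \texttt{false} the sets \(V_r,E_r\) persist), so that \(\mathbb{G}_n\) is monotone.

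\textbf{Step 2 (search term).} For a fixed \(n\), \(\mathbb{G}_n\) is finite. I would show that, as \(k\to\infty\), every \texttt{AO-X} call returns a path of cost at most \(C^\ast_n\) plus an arbitrarily small amount, and in fact reaches \(C^\ast_n\) exactly. Let \(\pi=(q_0,\dots,q_M)\) be an optimal path in \(\mathbb{G}_n\), and suppose \(c_{\mathrm{best}} > C(\pi)\). I would argue by induction along \(\pi\) that, with probability bounded below by some \(p>0\) per iteration, the tree containing \((q_j, g_j)\), where \(g_j\) is the cost-to-come along \(\pi\), adds \((q_{j+1}, g_{j+1})\). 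Three facts support this:
\begin{enumerate*}[label=(\roman*)]
\item informed sampling has positive density on a neighbourhood of the direction from \(q_j\) to \(q_{j+1}\), since \(\pi\) lies in the prolate hyperspheroid;
\item \(c_{\mathrm{rand}}\) lands in a positive-measure interval in which \((q_j,g_j)\) is the admissible nearest vertex, using the lowest-cost tie-breaking;
\item \(q_{j+1}\) is then the admissible neighbour best aligned with \(q_{\mathrm{rand}}\).
\end{enumerate*}
The same reasoning applies to the connection step. A finite-Markov-chain or Borel--Cantelli argument then shows that, almost surely, some call returns a path of cost at most \(C(\pi)\). Block parallelism only adds further independent trials, so it does not hurt.

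\textbf{Step 3 (combine).} I would combine the two steps along a joint schedule \(n,k\to\infty\). First pick \(n\) such that \(C^\ast_n \le (1+\epsilon)C^\ast\), which happens almost surely eventually by Step 1. Then Step 2 gives \(c_{\mathrm{best}} \le C^\ast_n\) after finitely many further iterations almost surely. Taking a countable union over \(\epsilon = 1/m\) finishes the argument.

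The main obstacle is Step 2(ii)--(iii): showing that the admissible oracle, combined with nearest-vertex selection in state-cost space, actually follows \(\pi\) with probability bounded below. The difficulty is that a competing vertex at the same configuration with higher cost, or a closer unrelated vertex, may capture \(q_{\mathrm{rand}}\). My first attempt would be to sample \(q_{\mathrm{rand}}\) in a small cone around \(q_{j+1}\), close enough that \(q_j\) is the nearest admissible configuration once \(c_{\mathrm{rand}}\) is just above \(g_j + \|q_{\mathrm{rand}}-q_j\|\). If that fails, I would instead argue via the \drrt oracle's known probabilistic completeness on finite tensor roadmaps~\cite{solovey2016finding}, applied to the budget-restricted subgraph.
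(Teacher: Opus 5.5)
Your plan follows the paper's proof. Step~1 is \cref{prop:radius}, which you prove by direct covering instead of via the \textsc{prm}\(^\ast\) radius plus monotonicity. Step~2 is \cref{lem:cond1} plus strict decrease of \(c_{\mathrm{best}}\) through the finitely many path costs of \(\mathbb{G}_n\); the paper ultimately relies on this too, so skipping the contraction condition of \cref{lem:cond2} and \cref{thm:wellbehaved} loses nothing. Step~3 is the same \(1/j\) combination.

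The obstacle you flag is resolved as in \cref{lem:cond1}, and your first-attempt choice of \(c_{\mathrm{rand}}\) would not work. Take \(q_{\mathrm{rand}}\) inside the oracle's cone toward \(q_{j+1}\), within half the minimum separation of distinct vertices of \(\mathbb{G}_n\) from \(q_j\), so that \(q_j\) (its cheapest copy, by the tie-break) is nearest whatever else the tree holds. Pair this with \(c_{\mathrm{rand}} > g_{j+1}\), which has probability at least \((c_{\mathrm{best}} - C(\pi))/c_{\mathrm{best}}\). Do not tighten \(c_{\mathrm{rand}}\) to just above \(g_j + \|q_{\mathrm{rand}} - q_j\|\): for such \(q_{\mathrm{rand}}\) that bound is below \(g_{j+1}\), so the extension to \(q_{j+1}\) fails the admissibility test.
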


\begin{proof}
  Fix \(n\).
  Each optimizing iteration searches state-cost space for a path below \(c_{\mathrm{best}}\) (\cref{sec:convergence}), the bounded suboptimality problem of \aox~\cite{hauser2016asymptotically}, \cref{thm:wellbehaved}.
  By \cref{lem:cond1,lem:cond2} it satisfies \cref{thm:wellbehaved}, so the incumbent converges to \(C^\ast_n\) almost surely, and since \(\mathbb{G}_n\) admits finitely many path costs, it equals \(C^\ast_n\) after finitely many iterations.
  By \cref{prop:radius}, \(C^\ast_n \rightarrow C^\ast\) almost surely.
  Hence for each \(j \in \mathbb{N}\), almost surely \(C^\ast_n \le C^\ast + 1/j\) for \(n \ge n_j\) and \(c_{\mathrm{best}} = C^\ast_n\) for \(k > k_j(n)\), so \(\limsup c_{\mathrm{best}} \le C^\ast + 1/j\) on any schedule with \(n \rightarrow \infty\) and \(k > k_j(n)\); as \(c_{\mathrm{best}} \ge C^\ast\), \(c_{\mathrm{best}} \rightarrow C^\ast\) almost surely.
\end{proof}

\begin{theorem}[Hauser and Zhou~\cite{hauser2016asymptotically}]
  \label{thm:wellbehaved}
  The \aox meta-algorithm is \asao whenever its feasible subroutine is \emph{well-behaved}, that is, on a problem with optimal cost \(C^\ast_n\):
  \begin{enumerate}[label=\textbf{(C\arabic*)}, leftmargin=*, align=left, labelsep=0.4em]
    \item it terminates in almost surely finite time whenever a solution below \(c_{\mathrm{best}}\) exists; and
    \item its expected suboptimality contracts by a fixed fraction each iteration,
  \end{enumerate}
  \begin{equation}
    \mathbb{E}\!\left[C(y) \mid c_{\mathrm{best}}\right] - C^\ast_n \;\le\; (1-w)\,(c_{\mathrm{best}} - C^\ast_n), w > 0.
    \label{eq:contraction}
  \end{equation}
\end{theorem}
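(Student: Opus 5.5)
The plan is to reduce the statement to a contraction argument for the error sequence of the incumbent cost, followed by a Borel--Cantelli step. Let \(c_k\) be the incumbent \(c_{\mathrm{best}}\) after the \(k\)-th call to the feasible subroutine, and let \(e_k = c_k - C^\ast_n \ge 0\). The proof then has three steps.

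\textbf{Step 1: the iterates are well defined and the error is monotone.} By \textbf{(C1)}, whenever \(e_k > 0\) a path cheaper than \(c_k\) exists, so the \(k\)-th call returns after almost surely finite time. A countable union of null events is null, so almost surely every iteration terminates and the whole sequence \((c_k)\) exists. Each returned path \(y\) satisfies \(C(y) < c_k\), and the incumbent is only replaced by cheaper paths. Hence \(e_{k+1} \le e_k\), and \((e_k)\) is a nonincreasing, nonnegative sequence. I treat \(e_k = 0\) as an absorbing state: the incumbent is optimal and no further improvement is needed. On that event the inequality in \textbf{(C2)} holds trivially.

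\textbf{Step 2: contraction in expectation.} Condition on the history \(\mathcal{F}_k\) of the first \(k\) iterations. Fresh randomness is used in each call, so the conditional law of the next output depends on the past only through \(c_k\). Then \textbf{(C2)} gives \(\mathbb{E}[e_{k+1} \mid \mathcal{F}_k] \le (1-w)\, e_k\). Taking expectations and iterating yields \(\mathbb{E}[e_k] \le (1-w)^k e_0\). Here \(e_0\) is finite because the initial feasible solve returns a finite-cost path.

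\textbf{Step 3: almost sure convergence.} For any \(\varepsilon > 0\), Markov's inequality gives \(\Pr(e_k > \varepsilon) \le (1-w)^k e_0/\varepsilon\). This bound is summable in \(k\), so by Borel--Cantelli \(e_k \le \varepsilon\) eventually, almost surely. Intersecting over \(\varepsilon = 1/j\), \(j \in \mathbb{N}\), gives \(c_k \to C^\ast_n\) almost surely, which is the \asao claim. Monotonicity from Step 1 offers a second route: it makes convergence in probability equivalent to almost sure convergence, since \(\{e_k > \varepsilon\}\) is a decreasing sequence of events.

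The step I expect to be the main obstacle is making Step 2 rigorous rather than the limit argument. Three issues must be handled. First, \textbf{(C2)} must hold with a single \(w > 0\) uniformly over all values of \(c_{\mathrm{best}}\) in \((C^\ast_n, c_0]\); I will read the theorem's hypothesis this way. Second, one must justify that conditioning on \(c_{\mathrm{best}}\) alone is enough, i.e.\ that the subroutine's randomness is independent of earlier iterations. Third, one must handle the boundary case \(c_{\mathrm{best}} = C^\ast_n\), where \textbf{(C1)} gives no termination guarantee. For that case I will adopt the convention that the run is then already optimal, so any later (possibly nonterminating) behaviour leaves \(e_k = 0\) and the claim unaffected.
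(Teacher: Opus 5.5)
The paper gives no proof of this statement. It is quoted from Hauser and Zhou and used as a black box in the proof of \cref{thm:asao}, so there is no in-paper argument to match.

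Your proof is a correct, self-contained reconstruction of the reasoning the citation stands in for. You first note that the error is monotone. You then iterate \textbf{(C2)} to get \(\mathbb{E}[e_k] \le (1-w)^k e_0\). Finally you upgrade this geometric decay in mean to almost sure convergence, via Markov plus Borel--Cantelli, or more directly via monotonicity, since the events \(\{e_k > \varepsilon\}\) are decreasing.

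What your version adds over the bare citation is that it makes explicit the conventions the quoted statement leaves implicit:
\begin{itemize}
\item a single \(w\) that holds uniformly in \(c_{\mathrm{best}}\);
\item conditioning that is effectively on the full history (fresh randomness in each call);
\item an absorbing state at \(c_{\mathrm{best}} = C^\ast_n\), where \textbf{(C1)} gives no termination guarantee.
\end{itemize}

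One small repair is needed. \(e_0\) is the suboptimality of a random initial solution, so it is not a constant. Apply Markov's inequality conditionally on \(\mathcal{F}_0\), or use the fact that costs are bounded on the finite roadmap.

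In the paper's own instantiation these subtleties largely disappear. \cref{lem:cond2} derives the contraction from an almost-sure per-call decrease: each call drops the incumbent to a strictly smaller element of the finite set \(\mathcal{C}_n\). The bound therefore holds given any history, and the incumbent reaches \(C^\ast_n\) after finitely many (at most \(|\mathcal{C}_n|\)) calls, which is what the proof of \cref{thm:asao} ultimately needs. Your probabilistic upgrade step is genuinely required only for the general, continuous-cost form of the theorem as Hauser and Zhou state it.
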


We instantiate the subroutine with cost-bounded \drrt on the fixed roadmap \(\mathbb{G}_n\) and verify both conditions.

\begin{lemma}[\textbf{C1}]
  \label{lem:cond1}
  If \(\mathbb{G}_n\) has a path cheaper than \(c_{\mathrm{best}}\), one optimizing iteration returns one in almost surely finite time.
\end{lemma}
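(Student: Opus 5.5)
Fix a path \(\sigma = (q_0, \dots, q_M)\) in \(\mathbb{G}_n\) with \(q_0 = q_{\mathrm{start}}\), \(q_M = q_{\mathrm{goal}}\) and \(C(\sigma) < c_{\mathrm{best}}\). I will show that in every \texttt{AO-X} iteration, regardless of the current trees, there is a fixed probability \(p > 0\) that the tree is extended one step further along \(\sigma\). Since \(\mathbb{G}_n\) is finite, the trees can only grow to a bounded size. A geometric-trial argument then gives termination in almost surely finite time.

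Let \(g_j = \sum_{i<j}\|q_{i+1}-q_i\|\) be the cost-to-come of \(q_j\) along \(\sigma\), and let \(\Delta = c_{\mathrm{best}} - C(\sigma) > 0\). I will consider the start tree and track the largest index \(j\) such that the tree contains a vertex \((q_j, g)\) with \(g \le g_j\). Initially \(j \ge 0\), since \(q_{\mathrm{start}}\) is the root.

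In an iteration that expands the start tree, consider the event \(E\) that three things happen. First, \(q_{\mathrm{rand}}\) lands in a small neighborhood of \(q_{j+1}\). This event has positive probability because every point of \(\sigma\) satisfies \(\|q - q_{\mathrm{start}}\| + \|q_{\mathrm{goal}} - q\| \le C(\sigma) < c_{\mathrm{best}}\), so it lies strictly inside the prolate hyperspheroid. Second, \(c_{\mathrm{rand}}\) falls in a subinterval of \([g_{j+1} + \Delta/2,\ g_{j+1} + \Delta]\); this subinterval lies inside \([\underline{g}(q_{\mathrm{rand}}), c_{\mathrm{best}}]\), so it has positive probability. Third, \(\texttt{NEAREST}\) returns a tree vertex with a configuration in the right place.

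The main obstacle is this third requirement. \(\texttt{NEAREST}\) might return some other admissible vertex closer to \(q_{\mathrm{rand}}\) than \(q_j\), or the oracle might choose a different neighbor. I plan to handle it as follows. Because the roadmap vertex set is finite, I can choose the neighborhood of \(q_{j+1}\) small enough that \(q_{j+1}\) itself is the only possible configuration within distance \(\rho\), and that any vertex at configuration \(q_{j+1}\) is strictly nearest. There are then two cases. If the tree already holds \(q_{j+1}\) with cost at most \(g_{j+1}\), the index \(j\) has already advanced. Otherwise, the nearest admissible vertex either is \((q_j, g \le g_j)\), or lies at some other configuration that also admits \(q_{\mathrm{rand}}\) within budget. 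For the first case, I will make the oracle's best-aligned admissible neighbor equal to \(q_{j+1}\) by taking \(q_{\mathrm{rand}}\) close enough to \(q_{j+1}\), following the direction-cone argument of \drrt~\cite{solovey2016finding}. The same cone argument forces the oracle's choice from any vertex. For the second case, I will use a potential that strictly increases, namely the set of pairs \((q, \text{cost})\) reachable at cost below \(c_{\mathrm{best}}\) in \(\mathbb{G}_n\). This set is finite, and each successful extension adds a new element or lowers a cost, of which there are finitely many values. So only finitely many such "wasted" successes can occur before \(q_{j+1}\) is added.

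Finally, the edges of \(\sigma\) are collision-free in \(\mathbb{C}^{\mathrm{f}}\) by the definition of \(\mathbb{G}\), so the collision test on Ln.~30 passes. Once \(q_M\) is reached, or more simply once the connection test with budget \(c_{\mathrm{best}} - g_{\mathrm{new}}\) succeeds against the goal tree (whose root \(q_{\mathrm{goal}}\) is always admissible when \(g_{\mathrm{new}} + \|q_{\mathrm{goal}} - q_{\mathrm{new}}\| < c_{\mathrm{best}}\)), the flag \(\mathit{solved}\) is set. Tree swaps only alternate which tree is expanded, so at least every other iteration has the same per-iteration lower bound \(p\) on progress. With finitely many progress steps each needing a geometric number of trials, the number of iterations is almost surely finite. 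The block-level parallelism only adds more independent trials per step, so it cannot hurt this argument.
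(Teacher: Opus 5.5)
There is a genuine gap, and it lies in your third requirement. Aiming \(q_{\mathrm{rand}}\) at a small ball around \(q_{j+1}\) gives you no control over which vertex \texttt{NEAREST} (Ln.~28) returns. Your own choice of \(\rho\) shows the problem. Suppose the tree holds a copy \((q_{j+1}, g'')\) with \(g_{j+1} < g'' < g_{j+1} + \Delta/2 - \rho\). That copy is admissible for every \(c_{\mathrm{rand}}\) in your interval, and it is strictly nearest, so it is selected on every realization of \(E\). The oracle then extends from \(q_{j+1}\) toward a point in a tiny ball around \(q_{j+1}\), which is an essentially arbitrary direction. Nothing forces a copy of \(q_{j+1}\) with cost at most \(g_{j+1}\) to appear. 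The same failure occurs with any low-cost admissible vertex closer to \(q_{j+1}\) than \(q_j\), and no choice of \(c_{\mathrm{rand}}\) excludes those.

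Your potential argument does not rescue this. First, Ln.~31--32 insert \(z_{\mathrm{new}}\) without a duplicate check, so a ``wasted'' extension can simply re-insert an existing pair. More fundamentally, a finite potential bounds only the number of iterations that create a \emph{new} pair. It does not bound the number of iterations in which \(E\) sends \texttt{NEAREST} to a wrong vertex. Once the tree's set of distinct pairs stops changing, nothing in your argument prevents every realization of \(E\) from landing in this wasted case. So you have no uniform \(p > 0\). The sentence ``only finitely many wasted successes can occur before \(q_{j+1}\) is added'' presupposes that \(q_{j+1}\) is eventually added, which is exactly what you need to prove.

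The repair, which is the paper's argument, is to aim \(q_{\mathrm{rand}}\) at \(q_j\), not \(q_{j+1}\). Take it in a ball around \(q_j\) whose radius is below half the minimum separation of roadmap configurations. Then copies of \(q_j\) are the unique nearest tree configurations. The lowest-cost tie-break returns one with cost at most \(g_j\), and that copy is admissible once \(c_{\mathrm{rand}} > g_{j+1}\). Intersect this ball with the cone of directions from \(q_j\) in which the oracle's best-aligned neighbor is \(q_{j+1}\). This is the Voronoi-cell-and-cone event behind \drrt's advancement constant \(\xi_n\).

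The probability of that event depends only on \(\mathbb{G}_n\), not on the current tree. Combine it with the probability at least \((c_{\mathrm{best}} - C(\sigma))/c_{\mathrm{best}}\) that \(c_{\mathrm{rand}} > g_{j+1}\). This gives progress along \(\sigma\) with probability at least \(p_n \xi_n\) in each iteration, which is exactly what your geometric-trial conclusion needs. The paper packages it as \((p_n\xi_n)^M\) per block of \(M\) iterations.

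Separately, Ln.~24--25 expand the smaller tree rather than alternating, so ``at least every other iteration'' is unjustified. Run the same progress argument for whichever tree is expanded, using the reversal of \(\sigma\) for the goal tree.
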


\begin{proof}
  Let \(\sigma\) be such a path with \(M\) edges and prefix costs \(g_0 < \dots < g_M = C(\sigma)\).
  The step from \(q_j\) to \(q_{j+1}\) passes the filter and admissibility test whenever \(c_{\mathrm{rand}} > g_{j+1}\), with probability at least \(p_n = (c_{\mathrm{best}} - C(\sigma)) / c_{\mathrm{best}}\) for any \(q_{\mathrm{rand}}\), and \(q_{\mathrm{rand}}\) falls in the Voronoi cell of \(q_j\) and the oracle's cone toward \(q_{j+1}\) with probability at least \drrt's advancement constant \(\xi_n\)~\cite{solovey2016finding}.
  Both are bounded below over the finitely many path costs in \(\mathbb{G}_n\), so within \(M\) iterations the tree reaches \((q_{\mathrm{goal}}, g')\) with \(g' \le C(\sigma)\) with probability at least \((p_n \xi_n)^M\), hence almost surely in finite time.
  The connection step accepts only paths strictly cheaper than \(c_{\mathrm{best}}\).
\end{proof}

\begin{lemma}[\textbf{C2}]
  \label{lem:cond2}
  On \(\mathbb{G}_n\), the subroutine satisfies \eqref{eq:contraction} with a constant \(w_n > 0\) that depends only on \(\mathbb{G}_n\).
\end{lemma}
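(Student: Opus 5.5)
The plan is to show that, conditional on \(c_{\mathrm{best}}\), an optimizing iteration returns an \emph{optimal} path of \(\mathbb{G}_n\) with probability at least some \(w_n > 0\) independent of \(c_{\mathrm{best}}\). Every returned path is strictly cheaper than \(c_{\mathrm{best}}\) (\cref{lem:cond1} and the connection rule), so \(C(y) - C^\ast_n \le c_{\mathrm{best}} - C^\ast_n\) always. Splitting the expectation on the event \(\mathcal{E}\) that \(C(y) = C^\ast_n\) then gives
\begin{equation*}
  \mathbb{E}\!\left[C(y) \mid c_{\mathrm{best}}\right] - C^\ast_n \le \Pr[\neg\mathcal{E}]\,(c_{\mathrm{best}} - C^\ast_n) \le (1-w_n)(c_{\mathrm{best}} - C^\ast_n),
\end{equation*}
which is \eqref{eq:contraction}. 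If \(c_{\mathrm{best}} = C^\ast_n\), both sides vanish and there is nothing to prove. So I would assume \(c_{\mathrm{best}} > C^\ast_n\). Because \(\mathbb{G}_n\) has finitely many path costs, \(c_{\mathrm{best}}\) is one of them, and the gap \(\Delta_n\) between \(C^\ast_n\) and the next-smallest attainable cost is strictly positive.

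To lower-bound \(\Pr[\mathcal{E}]\), fix an optimal path \(\sigma^\ast = (q_0, \dots, q_M)\) in \(\mathbb{G}_n\). I would reuse the argument of \cref{lem:cond1}, applied to \(\sigma^\ast\) in place of \(\sigma\), and strengthen it to a race: with positive probability the tree traces \(\sigma^\ast\) to the goal before any other connection fires. In each iteration, the step from \(q_j\) to \(q_{j+1}\) is taken with probability at least \(p\,\xi_n\). Here \(p = (c_{\mathrm{best}} - C^\ast_n)/c_{\mathrm{best}} \ge \Delta_n / C_{\max}\), where \(C_{\max}\) is the largest relevant path cost. This lower bound is uniform in \(c_{\mathrm{best}}\) once \(c_{\mathrm{best}} > C^\ast_n\).

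What remains is to control the competing events, in which the subroutine terminates early with a suboptimal path. I would handle this crudely. The probability that a given iteration performs exactly the prescribed step on \(\sigma^\ast\) and nothing that terminates is at least \(p\,\xi_n\). Over \(M\) consecutive iterations (per block, or within a single block's serial execution), the event that all \(M\) prescribed steps occur therefore has probability at least \((p\,\xi_n)^M\). On that event the first connection to be accepted is the one completing \(\sigma^\ast\). The tie-break toward lowest cost and the strict-budget nearest rule (Ln.~37--39) then ensure the returned cost is exactly \(C^\ast_n\). Setting \(w_n = (\Delta_n \xi_n / C_{\max})^M\) gives a constant depending only on \(\mathbb{G}_n\).

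I expect the main obstacle to be this race argument. A bidirectional, block-parallel tree may connect early through a vertex reached by a suboptimal prefix, and in principle this could happen before \(\sigma^\ast\) is traced. I would first argue that connecting only at prefix costs below the budget \(c_{\mathrm{best}} - g_{\mathrm{new}}\) cannot produce a cheaper-than-optimal path. I would then lower-bound the probability that the initial \(M\) iterations of one designated block follow \(\sigma^\ast\), treating other blocks' outcomes adversarially. If that fails, I would fall back to a weaker \(w_n\) computed from the minimum, over vertices of the start tree, of the probability of extending toward the optimal continuation.
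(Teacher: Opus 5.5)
Your reduction to a uniform lower bound $\Pr[\mathcal{E}] \ge w_n$ is valid. That bound is where the proposal breaks.

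The claim that an iteration ``performs exactly the prescribed step on $\sigma^\ast$ and nothing that terminates'' with probability at least $p\,\xi_n$ is unjustified. The constant $p\,\xi_n$ from \cref{lem:cond1} only controls where $(q_{\mathrm{rand}}, c_{\mathrm{rand}})$ lands, so that the oracle proposes $q_{j+1}$ from $(q_j, g_j)$. It says nothing about the connection test of Ln.~33--35. That test runs after every successful extension and is a deterministic function of $q_{\mathrm{new}}$ and the current opposing tree.

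By the smaller-tree rule (Ln.~24--25), the opposing tree is itself extended between your prescribed steps. In the block-parallel setting it is also extended by every other block. So the opposing tree's nearest admissible vertex to $q_{j+1}$ may carry a cost-to-go inherited from a suboptimal branch. If the segment to it is collision-free, the subroutine stops at that step with a cost in $(C^\ast_n, c_{\mathrm{best}})$ with conditional probability one, whatever was sampled. The same rule also prevents $M$ consecutive extensions of a single tree.

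Treating the other blocks adversarially cannot give a uniform bound, since an adversarial block simply closes such a suboptimal connection first. Your fallback has the same defect, because it still needs the search not to stop early. Rescuing this route would require controlling the joint history of both trees and all blocks, which you do not do. One such event is that every extension in every block advances along $\sigma^\ast$ from its own end, so that any connection closes at cost at most $C^\ast_n$ by the triangle inequality.

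The missing idea is that no race is needed. You already state the two facts the paper's proof uses: every returned path is strictly cheaper than $c_{\mathrm{best}}$, and the attainable costs form a finite set $\mathcal{C}_n$. Together they give $C(y) \le c^-$ almost surely, with no probability estimate, where $c^-$ is the predecessor of $c_{\mathrm{best}}$ in $\mathcal{C}_n$.

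Write $\gamma(c) = c - c^-$ and $w_n = \min\{\gamma(c)/(c - C^\ast_n) : c \in \mathcal{C}_n,\ c > C^\ast_n\} > 0$. Then $\mathbb{E}[C(y) \mid c_{\mathrm{best}}] - C^\ast_n \le c_{\mathrm{best}} - \gamma(c_{\mathrm{best}}) - C^\ast_n \le (1-w_n)(c_{\mathrm{best}} - C^\ast_n)$, which is \eqref{eq:contraction}. The paper thus replaces your ``hit the optimum with positive probability'' event with a guaranteed one-step descent in the finite cost order. That sidesteps the tree-interaction issues entirely.
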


\begin{proof}
  \(\mathbb{G}_n\) is finite and edge costs are nonnegative, so the costs of its simple \(q_{\mathrm{start}}\) to \(q_{\mathrm{goal}}\) paths, including those closed by connecting segments, form a finite set \(\mathcal{C}_n\) whose least element is \(C^\ast_n\), and the incumbent takes values in \(\mathcal{C}_n\).
  For \(c_{\mathrm{best}} \in \mathcal{C}_n\) above \(C^\ast_n\), \cref{lem:cond1} returns a path strictly cheaper than \(c_{\mathrm{best}}\) almost surely, hence of cost at most the next element of \(\mathcal{C}_n\) below it.
  The gap to \(C^\ast_n\) therefore shrinks by at least \(\gamma(c_{\mathrm{best}}) = c_{\mathrm{best}} - \max\{c \in \mathcal{C}_n \mid c < c_{\mathrm{best}}\} > 0\).
  Let \(w_n\) be the minimum of \(\gamma(c) / (c - C^\ast_n)\) over the finitely many \(c \in \mathcal{C}_n\) above \(C^\ast_n\); it is positive.
  Then
  \begin{align*}
    \mathbb{E}[C(y) \mid c_{\mathrm{best}}] - C^\ast_n &\le c_{\mathrm{best}} - \gamma(c_{\mathrm{best}}) - C^\ast_n \\
    &\le (1 - w_n)(c_{\mathrm{best}} - C^\ast_n). \qedhere
  \end{align*}
\end{proof}

\begin{proposition}
  \label{prop:radius}
  Let each roadmap use connection radius \(r(n) \ge r^\ast(n)\), where
  \begin{equation}
    r^\ast(n) = (1+\eta)\,2\!\left(\!1+\tfrac{1}{d}\right)^{\frac{1}{d}}
    \!\left(\frac{\max_i \mu(\mathbb{C}^{\mathrm{f}}_i)}{\zeta_d}\right)^{\frac{1}{d}}
    \!\left(\frac{\log n}{n}\right)^{\frac{1}{d}}
    \label{eq:radius}
  \end{equation}
  for fixed \(\eta > 0\), with \(\mu\) the volume measure and \(\zeta_d\) the volume of the unit \(d\)-ball.
  Then \(C^\ast_n \rightarrow C^\ast\) almost surely.
\end{proposition}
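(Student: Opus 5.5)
The approach is to reduce \cref{prop:radius} to the single-robot \textsc{prm}\(^\ast\) optimality result of Karaman and Frazzoli~\cite{karaman2011sampling}, lifted to the tensor-product roadmap in the way \drrtstar does~\cite{shome2020drrt}. Upper and lower bounds on \(C^\ast_n\) are handled separately.

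The lower bound is trivial. Every path in \(\mathbb{G}_n\) is a collision-free path in \(\mathbb{C}^{\mathrm{f}}\), since its vertices are mutually collision-free and its edges are checked. Hence \(C^\ast_n \ge C^\ast\) for every \(n\), and it suffices to show \(\limsup_n C^\ast_n \le C^\ast\) almost surely.

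For the upper bound, fix \(\epsilon>0\). By robust optimality, pick a path \(\sigma_\epsilon\) in \(\mathbb{C}^{\mathrm{f}}\) with cost at most \((1+\epsilon)C^\ast\) and clearance \(\beta>0\). I would not try to track \(\sigma_\epsilon\) with all robots moving at once, because tensor-product edges let only one robot move at a time or all move along their own edges. Instead, reparametrize \(\sigma_\epsilon\) into a sequence of short segments on which each robot's projection \(\sigma^i_\epsilon\) is tracked in its own roadmap \(\mathcal{R}_i\).

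The key single-robot step is as follows. Cover \(\sigma^i_\epsilon\) by balls of radius proportional to \(r(n)\), with centres spaced so that consecutive ball points lie within \(\delta=r(n)\) of each other. By the choice \(r(n)\ge r^\ast(n)\) in \cref{eq:radius}, which is the \textsc{prm}\(^\ast\) threshold using \(\max_i\mu(\mathbb{C}^{\mathrm{f}}_i)\) so it dominates each robot's own threshold, the probability that some ball is empty is summable in \(n\). Borel--Cantelli then gives that almost surely, for all large \(n\), every ball contains a sample. Connecting consecutive samples yields a roadmap path \(\pi^i_n\) that stays within \(O(r(n))\) of \(\sigma^i_\epsilon\) and has cost at most \(C(\sigma^i_\epsilon)+o(1)\); its edges are collision-free for robot \(i\) once \(O(r(n))<\beta\).

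Next, assemble the composite path. Advance the robots alternately along their \(\pi^i_n\), synchronized with the time parametrization of \(\sigma_\epsilon\), so that at each step the composite configuration stays within \(O(r(n))\) of the point on \(\sigma_\epsilon\). Because \(\sigma_\epsilon\) has clearance \(\beta\) in the composite space, including robot--robot separation, all these composite vertices and edges lie in \(\mathbb{C}^{\mathrm{f}}\) for large \(n\).

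The main obstacle is the cost of this interleaving. Moving one robot at a time can replace a diagonal segment by a staircase, and with the Euclidean composite cost this inflates length by up to a factor \(\sqrt{K}\). I would try first to use the "all robots traverse their own edge simultaneously" edges of \(\mathbb{G}_n\). Each robot's roadmap path is sampled at a common number of waypoints \(m_n\), with robots allowed to stay put, and corresponding waypoints are paired, so each composite step approximates a straight composite segment of \(\sigma_\epsilon\). The resulting cost is then bounded by \(C(\sigma_\epsilon)+o(1)\) by the triangle inequality.

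This gives \(\limsup_n C^\ast_n \le (1+\epsilon)C^\ast\) almost surely. Intersecting over \(\epsilon=1/j\), a countable family, yields \(C^\ast_n\to C^\ast\) almost surely. If exact synchronization fails, the fallback is to cite the \drrtstar asymptotic-optimality theorem~\cite{shome2020drrt} directly for the tensor roadmap under the same radius condition.
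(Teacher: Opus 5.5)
Your skeleton matches the paper's: the trivial bound \(C^\ast_n \ge C^\ast\), the \textsc{prm}\(^\ast\) radius applied per robot through \(\max_i\mu(\mathbb{C}^{\mathrm{f}}_i)\), clearance of \(\sigma_\epsilon\) projecting to every robot, per-robot occupancy combined across the \(K\) robots, and assembly of a composite path. Your fallback is the paper's own citation of \drrtstar.

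The step that fails is the cost estimate. ``Every ball contains a sample'' bounds each waypoint error only by the ball radius \(q_n\). Summability over the \(\Theta((n/\log n)^{1/d})\) balls forces \(q_n\) to be at least a fixed fraction of \(r^\ast(n)\), and the centre spacing must satisfy \(s_n + 2q_n \le r(n)\). So at \(r(n) = r^\ast(n)\) the ratio \(q_n/s_n\) stays bounded away from zero; for the robot attaining \(\max_i\mu(\mathbb{C}^{\mathrm{f}}_i)\) it exceeds \(1/(2\eta)\). The triangle inequality then gives only \((1+2q_n/s_n)\,C(\sigma_\epsilon)\). Chosen samples alternating sides of the path are consistent with that information and realize a constant relative excess, so no \(o(1)\) can come out of this argument. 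Your composite triangle-inequality step has the same defect: it sums \(\Theta(1/r(n))\) waypoint errors, each of size up to order \(r(n)\). (The simple argument would suffice if \(r(n)/r^\ast(n)\to\infty\), e.g.\ for the constant \(\delta\) the algorithm actually uses.)

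The missing idea is the second layer of the Karaman--Frazzoli \textsc{prm}\(^\ast\) proof. Concentric balls of radius \(\beta q_n\) contain order \(\beta^d\log n\to\infty\) samples in expectation, so all but a fraction \(\alpha\) of them are occupied with probability tending to one. Centre the balls at \(\sigma^i_\epsilon(t_k)\) on one common grid \(t_k\) along \(\sigma_\epsilon\), with composite spacing \(s_n=\theta q_n\); re-indexing independently built \(\pi^i_n\) instead recreates a staircase at scale \(r(n)\). Pick waypoints in occupied small balls where possible. The triangle inequality then gives cost at most \((1+2\sqrt{K}(\beta+K\alpha)/\theta)\,C(\sigma_\epsilon)\), which suffices because \(\beta\) and \(\alpha\) can be taken small for fixed \(\theta\).

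This repair also breaks your route to almost-sure convergence. The occupancy-fraction event comes from a Markov bound of order \(n^{-c\beta^d}/\alpha\), which is not summable for small \(\beta\). Karaman--Frazzoli get summability from Poissonization with disjoint balls, which a common grid does not give slow robots. The \drrtstar result in your fallback is likewise, as the paper uses it, only a ``probability tending to one'' statement. So Borel--Cantelli followed by intersecting over \(\epsilon=1/j\) no longer closes the argument.

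The paper's proof contains exactly the step you need. The roadmaps are grown incrementally, so the \(\mathbb{G}_n\) are nested and \(C^\ast_n\) is nonincreasing and bounded below by \(C^\ast\). A monotone sequence converging to \(C^\ast\) in probability converges almost surely. With that substitution and the small-ball layer, your proof goes through.

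Your concern about one-robot-at-a-time interleaving under the composite Euclidean cost \(C(\sigma)=\sum_j\|q_{j+1}-q_j\|\) is legitimate; it can cost up to a factor \(\sqrt{K}\). The paper compresses the same simultaneous-move construction into its composite-waypoint sentence and the remark that stay-put edges have zero length. On that point your proposal is, if anything, more explicit.
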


\begin{proof}
  Eq.~\eqref{eq:radius} is the \textsc{prm}\(^\ast\) radius~\cite{karaman2011sampling} applied per robot.
  A composite path with positive clearance projects to a path with the same clearance for each robot.
  A composite vertex lies near a composite waypoint exactly when every robot's vertex lies near its own waypoint, so the roadmap's miss probability is at most the sum of \(K\) per-robot miss probabilities, each at the single-robot exponent.
  Per-robot paths combine because stay-put edges have zero length.
  Hence \(\mathbb{G}_n\) contains a path of cost at most \((1+\epsilon) C^\ast\) with probability tending to one, for every \(\epsilon > 0\)~\cite{shome2020drrt}.
  Since the roadmaps are constructed incrementally, \(C^\ast_n\) is monotonically nonincreasing. Moreover, \(C^\ast_n \ge C^\ast\) and for every \(\epsilon > 0\), \(C^\ast_n \le (1+\epsilon)C^\ast\) with probability tending to one. Thus, \(C^\ast_n \to C^\ast\) almost surely.

\end{proof}

\mrpop grows the roadmap between iterations rather than fixing \(n\), and it uses a constant radius \(\delta\) rather than the schedule in \eqref{eq:radius}.
Neither affects \cref{thm:asao}, as growth only adds edges, and \(\delta\) exceeds \(r^\ast(n)\) for all \(n\) past some \(n_0\).

\begin{figure*}[t]
  \centering
  \begin{subfigure}[t]{0.28\textwidth}
    \centering
    \includegraphics[width=\linewidth,keepaspectratio]{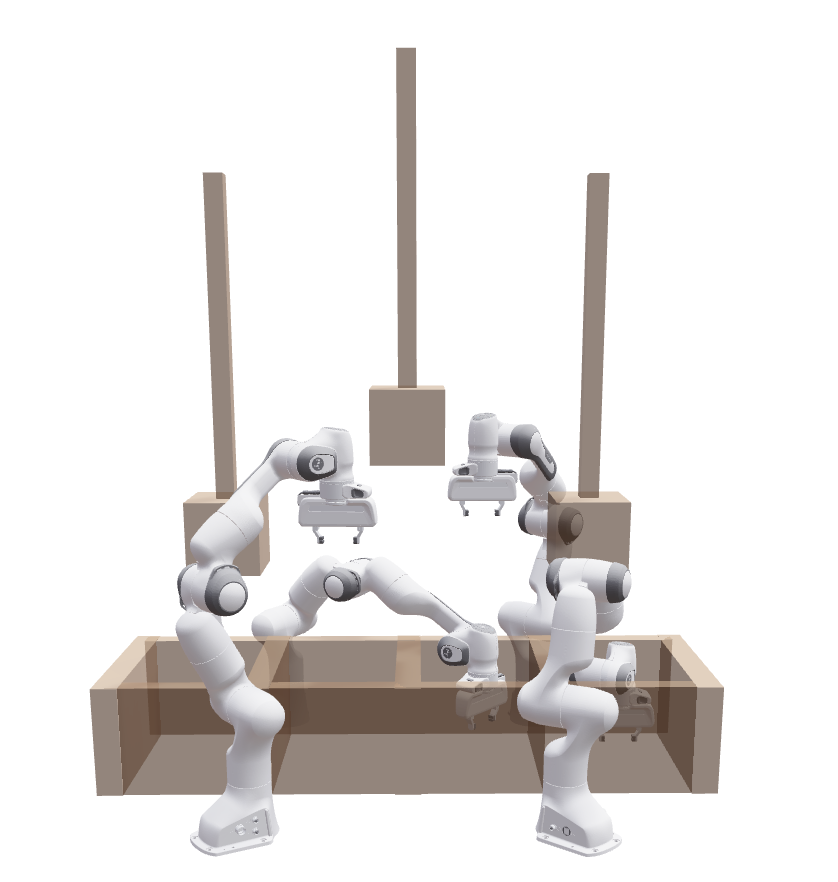}
    \caption{}
    \label{fig:franka4_scene}
  \end{subfigure}
  \hfill
  \begin{subfigure}[t]{0.34\textwidth}
    \centering
    \includegraphics[width=\linewidth,keepaspectratio]{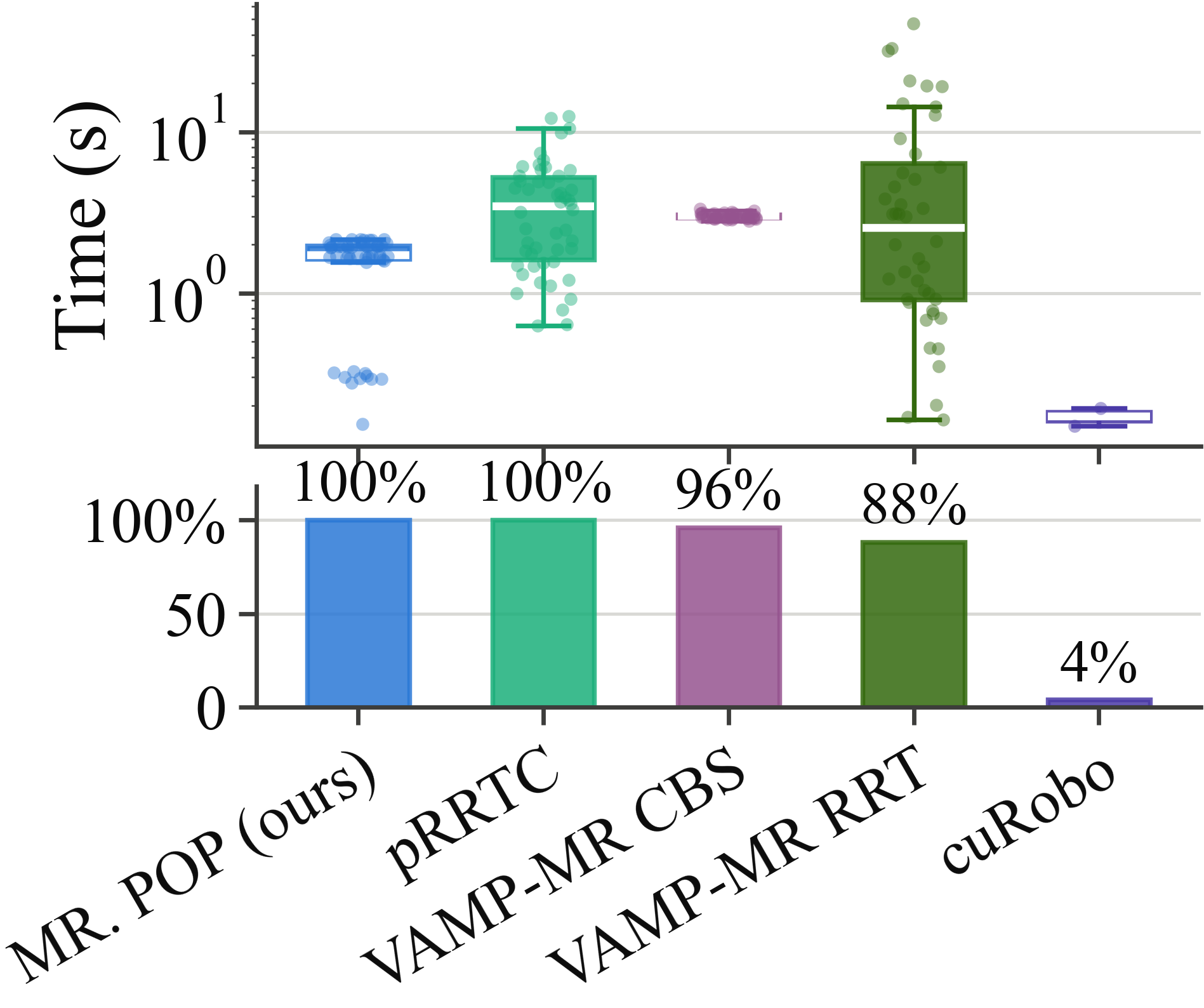}
    \caption{}
    \label{fig:panda4_init}
  \end{subfigure}
  \hfill
  \begin{subfigure}[t]{0.35\textwidth}
    \centering
    \includegraphics[width=\linewidth,keepaspectratio]{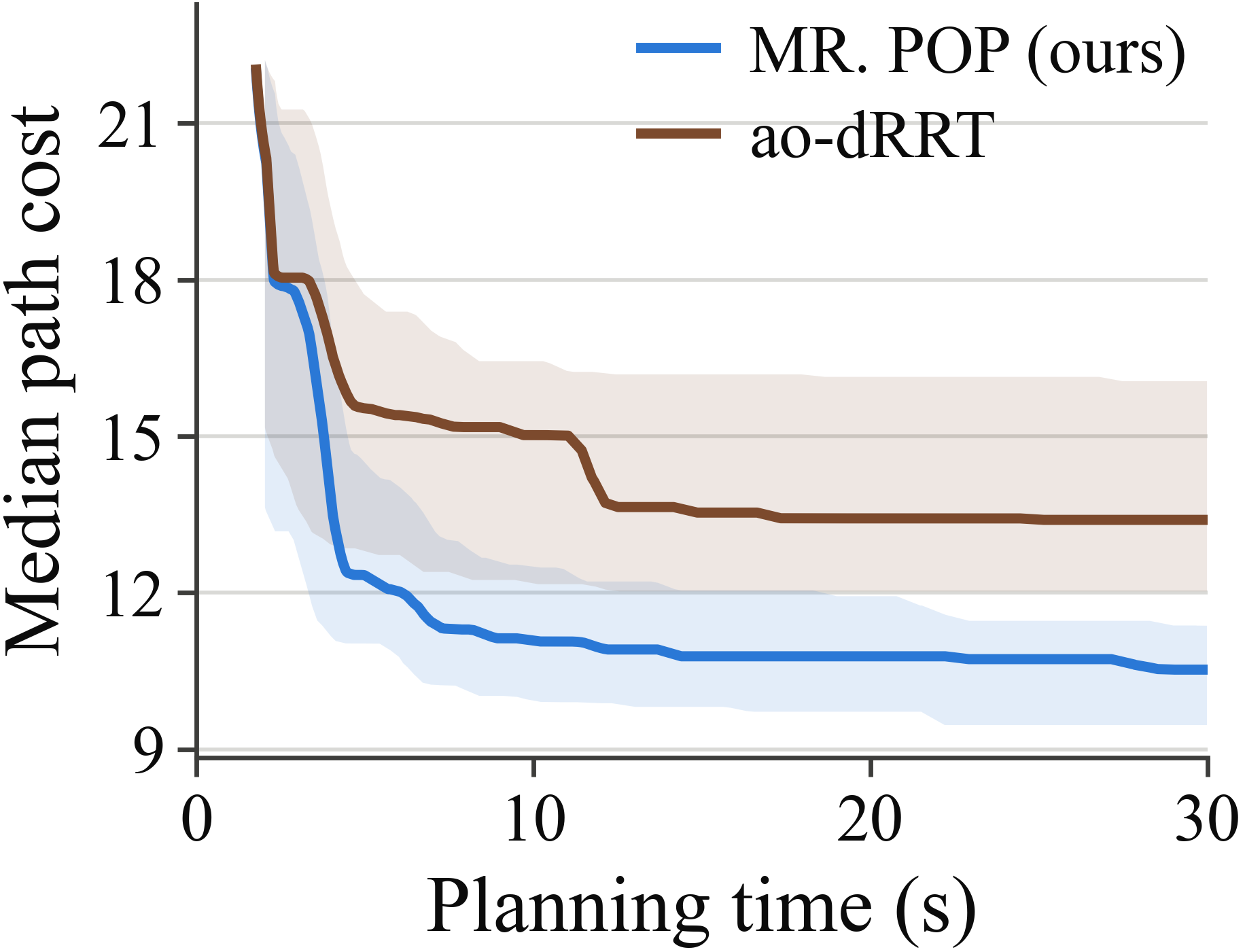}
    \caption{}
    \label{fig:cost_vs_time_panda4}
  \end{subfigure}
  \hfill
  \caption{Benchmark results for the 28 \dof four Franka bin packing problems.
    \textbf{(a)} The bin packing scene provides 50 manipulation problems.
    \textbf{(b)} Initial solution time and solve rate. Only \mrpop and \prrtc achieved a 100\% solve rate, and among all planners with a solve rate over 60\%, \mrpop averages a 2.5x speedup over \prrtc, a 1.9x speedup against \vampmr \cbs, and a 4.3x speedup against \vampmr \rrt.  
    \textbf{(c)} Optimization time vs median path cost and interquartile range. \mrpop converges to solutions of lower cost faster than \aodrrt while having a lower interquartile range, indicating a more consistent performance.}
  \label{fig:franka4}
  \vspace{-5pt}
\end{figure*}

\begin{figure*}[t]
  \centering
  \begin{subfigure}[t]{0.28\textwidth}
    \centering
    \includegraphics[width=\linewidth,keepaspectratio]{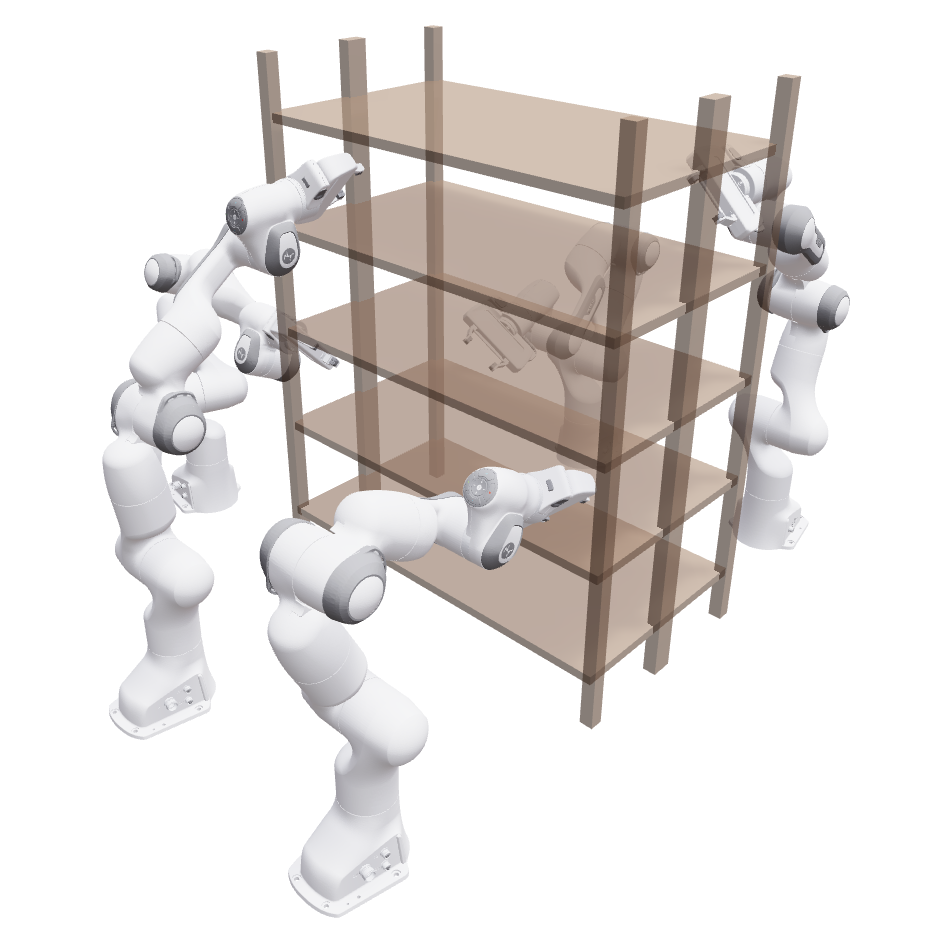}
    \caption{}
    \label{fig:franka5_scene}
  \end{subfigure}
  \hfill
  \begin{subfigure}[t]{0.34\textwidth}
    \centering
    \includegraphics[width=\linewidth,keepaspectratio]{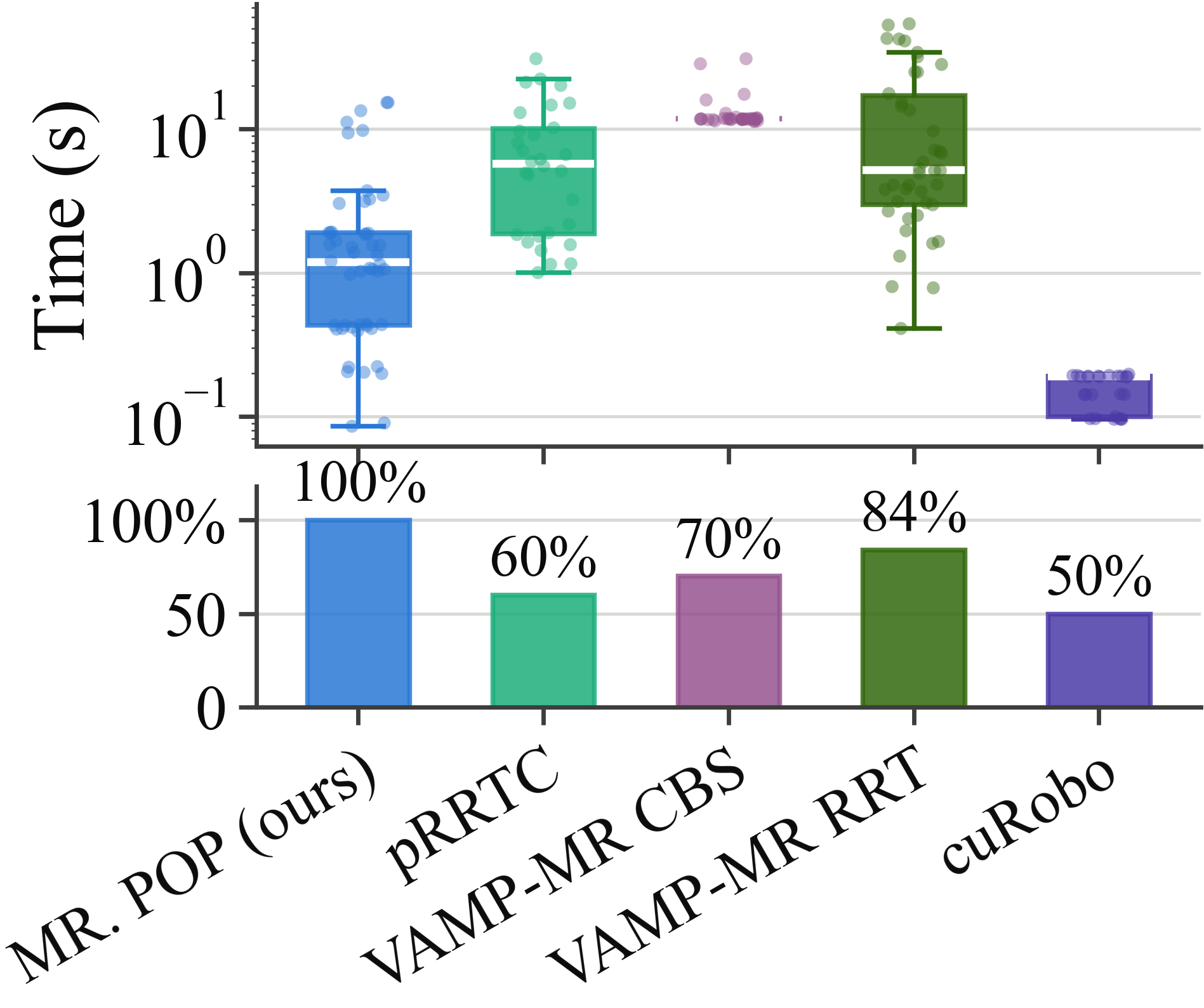}
    \caption{}
    \label{fig:panda5_init}
  \end{subfigure}
  \hfill
  \begin{subfigure}[t]{0.34\textwidth}
    \centering
    \includegraphics[width=\linewidth,keepaspectratio]{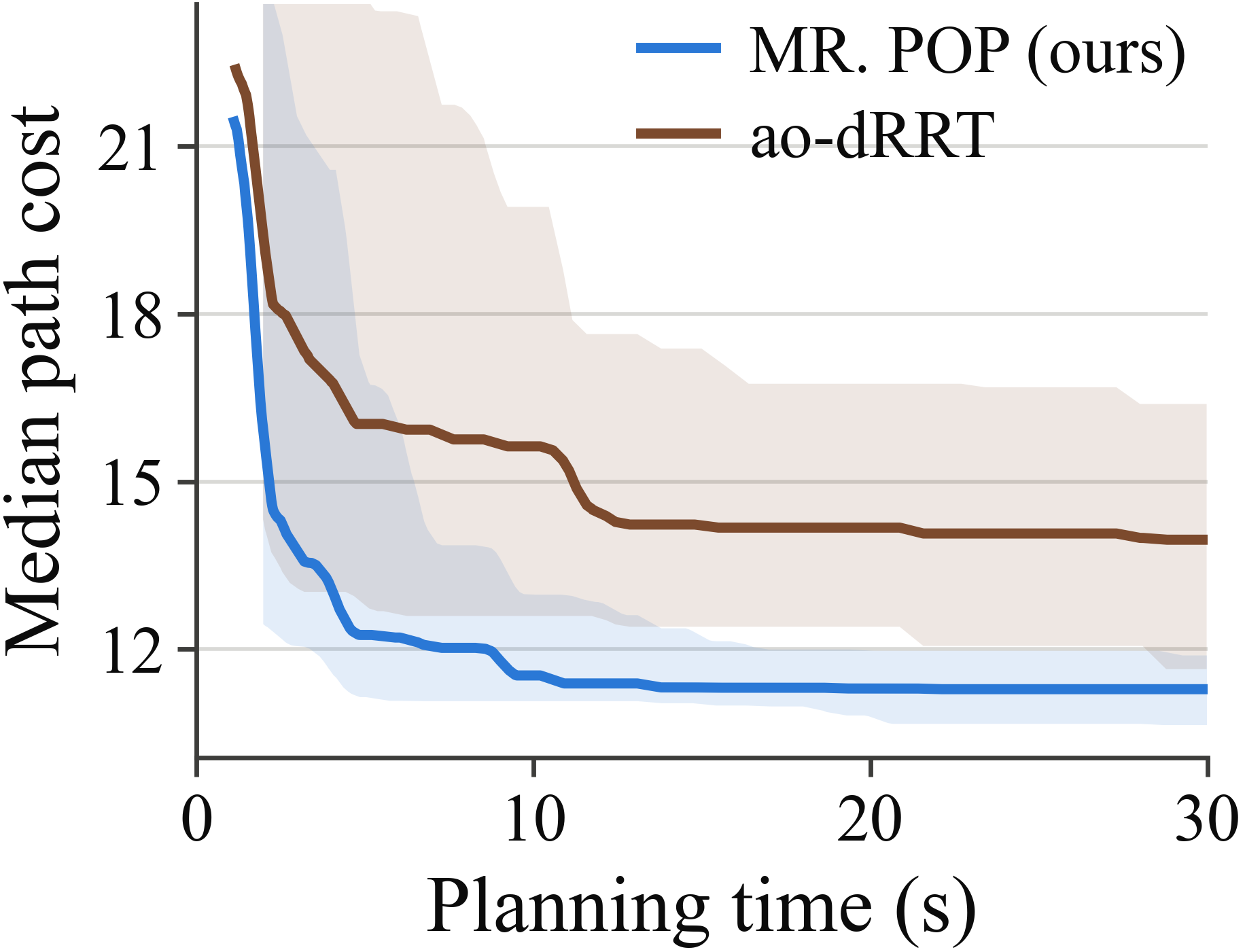}
    \caption{}
    \label{fig:cost_vs_time_panda5}
  \end{subfigure}
  \hfill
  \caption{Benchmark results for the 35 \dof five Franka shelf reaching problems.
    \textbf{(a)} The shelf reaching scene provides 50 manipulation problems.
    \textbf{(b)} Initial solution time and solve rate. Only \mrpop achieved a 100\% solve rate, and among all planners with a solve rate of at least 60\%, \mrpop averages a 3.1x speedup over \prrtc and a 5.1x speedup against \vampmr \cbs and \vampmr \rrt.  
    \textbf{(c)} Optimization time vs median path cost and interquartile range. \mrpop converges to solutions of lower cost faster than \aodrrt while having a lower interquartile range, indicating a more consistent performance.}
  \label{fig:franka5}
  \vspace{-5pt}
\end{figure*}

\section{Experiments} \label{sec:experiments}

In this section, we analyze \mrpop's performance in multi-robot and single-robot problems.
For multi-robot problems, \mrpop is evaluated on finding initial solutions and path optimization. We also combine \mrpop with \curobovtwo to show how globally optimal seeding from \mrpop improves the reliability of such optimizing planners for multi-robot problems.
\mrpop's tree search and \aox components are also isolated to ablate their performance using single-robot tasks. There we both evaluate the effectiveness of \mrpop's bi-directional tree search in finding initial solutions, as well as \mrpop's \aox layer's effectiveness at path optimization.
Finally, we validate \mrpop on dual-arm hardware.

\subsection{Methodology}
Multi-robot experiments adapt a prior multi-robot benchmark~\cite{shaoul2024accelerating} which includes a set of fifty problems where four Franka arms do bin-packing, totaling 28 \dof in the scene (Fig.~\ref{fig:franka4}a), and a set of fifty problems that have five Franka arms perform shelf reaching, totaling 35 \dof in the scene (Fig.~\ref{fig:franka5}a). 
Single-robot ablation experiments use the 8 \dof Fetch from the MotionBenchMaker (\mbm) dataset~\cite{chamzas2021motionbenchmaker}, with 7 scenes of 100 problems including counter-top manipulation, shelf operation, and constrained reaching (\cref{fig:fetch_mbm}).

For multi-robot experiments, the baselines are the \gpu-based \curobovtwo~\cite{sundaralingam2026curobov2} and \prrtc~\cite{huang2025prrtc}, along with the \cpu-based \simd-accelerated multi-robot planner \vampmr~\cite{huang2026vampmr}. We have also implemented a \gpu-based bi-directional \aodrrt~\cite{shome2020drrt} as a multi-robot \asao baseline to compare against the \aox meta-algorithm.\footnote{Note that we chose \aodrrt, rather than \drrtstar, because the \aox vs. rewiring operations serve as the fundamental difference between \mrpop and other multi-robot \asao planners. The additional changes seen in \drrtstar, such as offline processing, can be directly added to \mrpop.}

For single-robot experiments, the baselines are the \gpu-based \curobovtwo~\cite{sundaralingam2026curobov2} and \prrtc~\cite{huang2025prrtc}, and the \cpu-based \simd-accelerated \asao planners \aorrtc~\cite{wilson2025aorrtc}, \rrtstar~\cite{karaman2010incremental}, Batch Informed Trees (\bitstar)~\cite{gammell2015batch}, and Fast Marching Tree (\fmt)~\cite{janson2015fast} from the Open Motion Planning Library 2.0~\cite{guo2026open}.

All experiments were conducted on a desktop computer with an Intel Core Ultra 9 285K 24-Core \cpu and an \nvidia GeForce RTX 5090 \gpu.

\subsection{Multi-Robot Simulation Benchmarks}
As shown in Fig.~\ref{fig:panda4_init}, only \mrpop and \prrtc achieved a 100\% solve rate for the 28 \dof four Franka bin packing problems when each planner was given 50 seconds per problem before declaring initial planning failed. Comparing against planners with a solve rate of at least 60\%: \mrpop achieved a 2.5x speedup over \prrtc, 1.9x speedup over \vampmr \cbs, and a 4.3x speedup over \vampmr \rrt. For path optimization, we benchmark against the bi-directional \gpu-based \aodrrt, which attains the \asao guarantee through the rewiring operation. From Fig.~\ref{fig:cost_vs_time_panda4}, we see that \mrpop converges to solutions of lower cost faster while having a smaller interquartile range, showcasing \mrpop's superior speed and consistent optimization performance. 

\begin{figure*}[t]
    \includegraphics[width=1\textwidth]{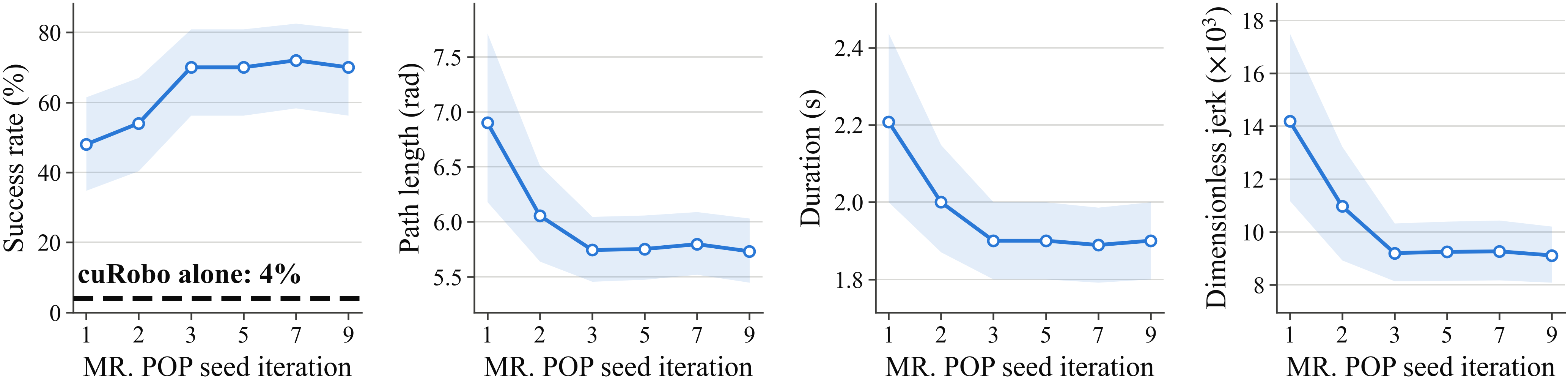}
    \vspace{-1.5em}
    \caption{
        \mrpop seeding iteration vs. \curobovtwo optimizer outcome tested on the 28 \dof four Franka arm problems. As \mrpop's iteration increased, the \curobovtwo optimizer succeeds more often while producing paths of lower cost, duration, and dimensionless jerk. 
    }
    \label{fig:seed_iter_metrics}
    \vspace{0pt}
\end{figure*}

\begin{figure*}[t]
  \centering
  \begin{subfigure}[t]{0.28\textwidth}
    \centering
    \includegraphics[width=\linewidth,keepaspectratio]{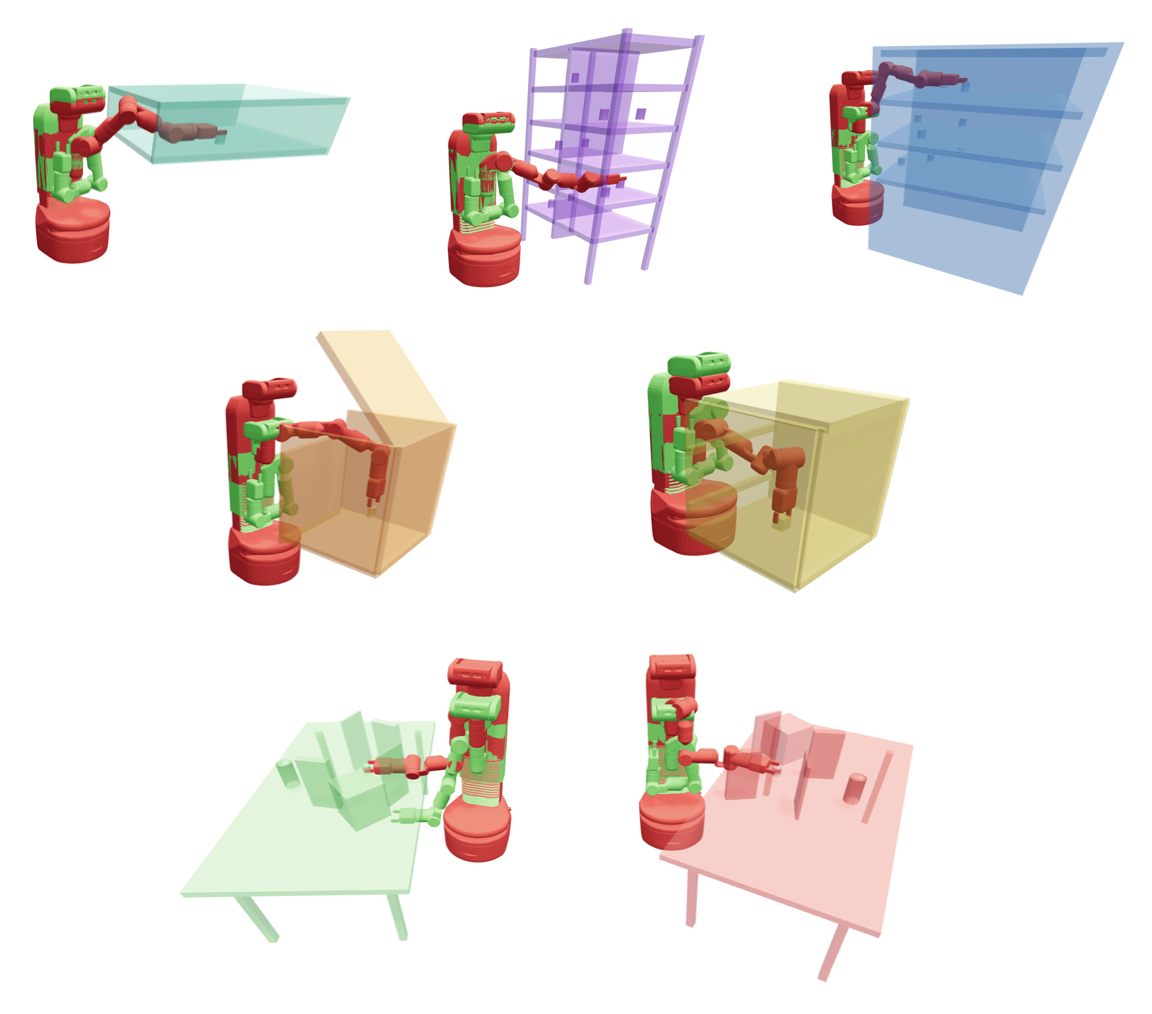}
    \caption{}
    \label{fig:fetch_mbm}
  \end{subfigure}
  \hfill
  \begin{subfigure}[t]{0.37\textwidth}
    \centering
    \includegraphics[width=\linewidth,keepaspectratio]{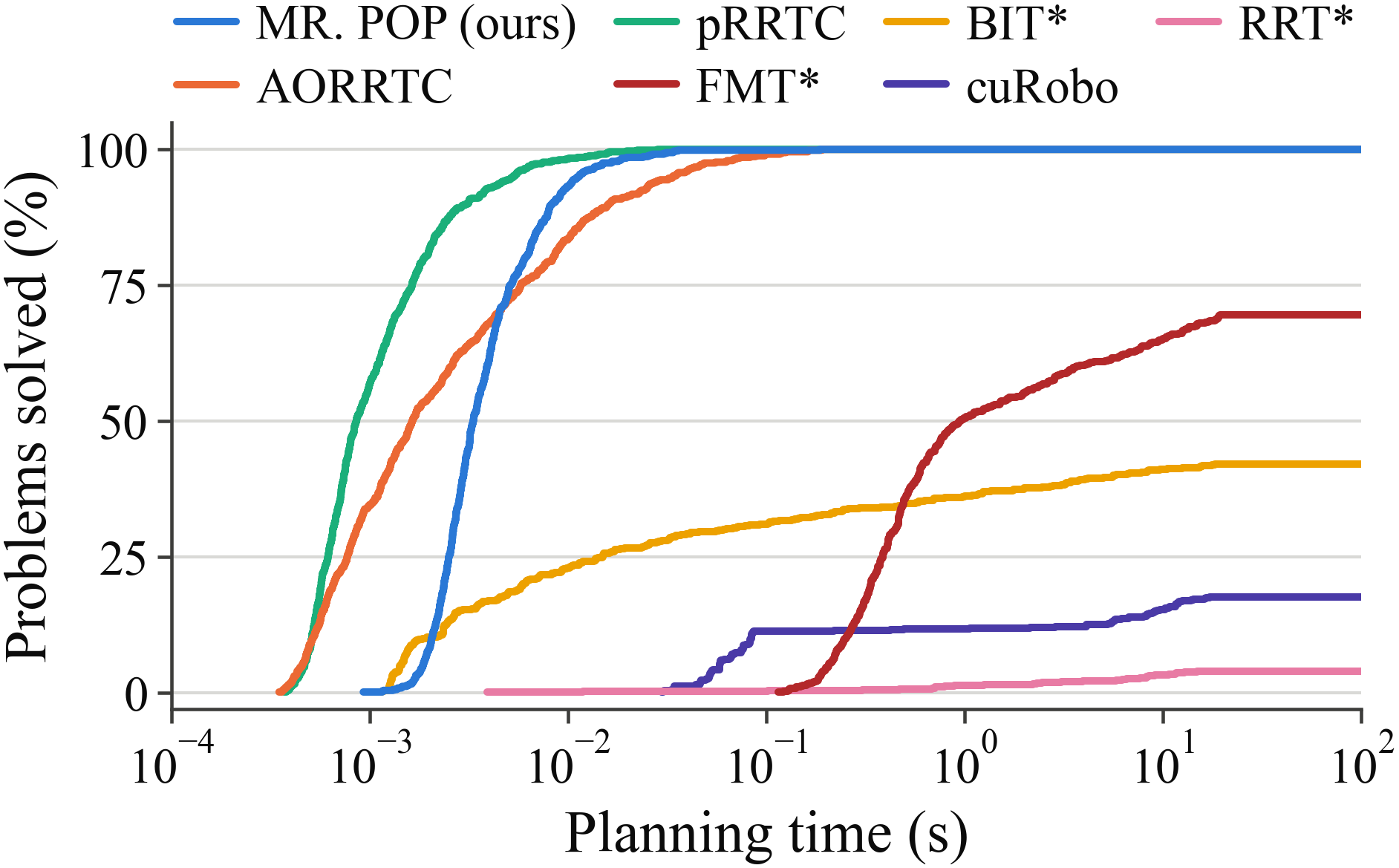}
    \caption{}
    \label{fig:fetch_init_time_success_cdf_kernel}
  \end{subfigure}
  \hfill
  \begin{subfigure}[t]{0.32\textwidth}
    \centering
    \includegraphics[width=\linewidth,keepaspectratio]{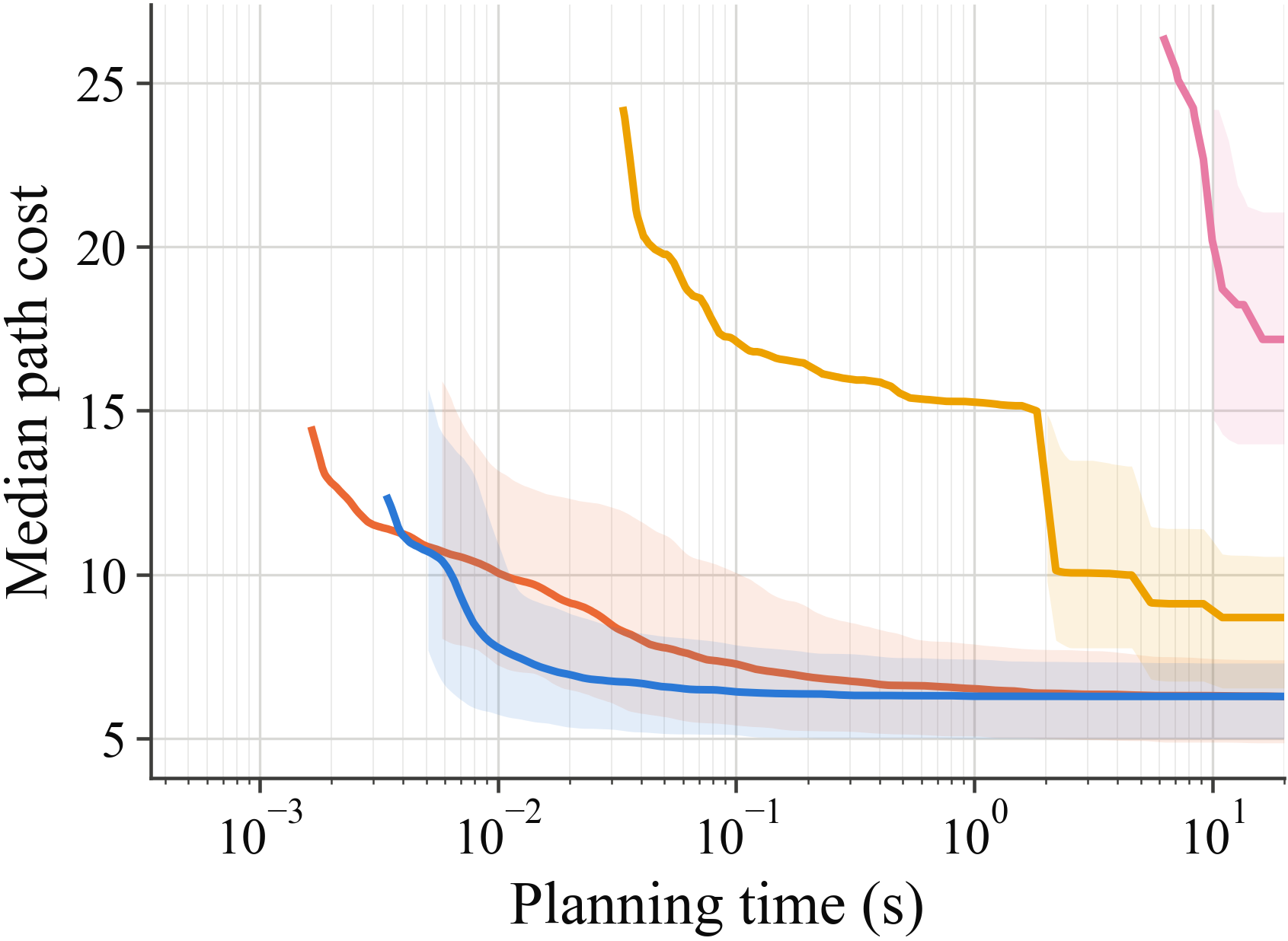}
    \caption{}
    \label{fig:fetch_time_vs_cost_own}
  \end{subfigure}
  \hfill
  \caption{Benchmark problems and results for the 8 \dof Fetch on the MotionBenchMaker dataset~\cite{chamzas2021motionbenchmaker}. \textbf{(a)} The MotionBenchMaker scenes provide diverse manipulation problems for the 8 \dof Fetch, including counter-top manipulation, accessing shelves, and constrained reaching; Fetch has 7 scenes with 100 problems each. \textcolor{green!90!white}{Green} and \textcolor{red}{red} denote the start and goal configurations per sample problem, respectively. \textbf{(b)} Among all planners, only \mrpop, \prrtc, and \aorrtc achieved a 100\% solve rate. \textbf{(c)} \mrpop converges to low-cost solutions orders of magnitude faster compared to other \asao planners, while also having a smaller interquartile range which demonstrates more consistent performance.}
  \label{fig:fetch}
\end{figure*}

For the 35 \dof five Franka benchmark (Fig.~\ref{fig:panda5_init},~\ref{fig:cost_vs_time_panda5}) we observe that \mrpop is the only planner achieving a 100\% solve rate while consistently converging to low-cost solutions faster. Compared to planners with a solve rate of at least 60\%: \mrpop averages a 3.1x speedup over \prrtc and a 5.1x speedup over \vampmr \cbs and \vampmr \rrt. Since \mrpop is the only planner that achieved a 100\% solve rate, the above means \mrpop averages a faster solve time while solving manipulation problems that other planners fail. Comparing across the 28 \dof and 35 \dof results, we see that \mrpop's solve rate lead from the second-best planner rises from 0\% to 16\%, while the average speedup also increased. This shows that \mrpop provides a more significant advantage as problem complexity scales.

Beyond geometric \asao planning, we evaluate \mrpop's effectiveness as a multi-robot seeder for motion optimizers. Seeds matter because they keep local optimizers out of poor local minima. We use \mrpop as the seeder for \curobovtwo's jerk-free motion optimizer, and compare \mrpop's optimizing iteration against \curobovtwo's output using the 28 \dof four Franka problems. The default \curobo seeder is a \prm that plans directly in the composite space, which often fails as that search space grows exponentially~\cite{shaoul2024accelerating}.

Referring to Fig.~\ref{fig:seed_iter_metrics}, we see that \mrpop raises \curobovtwo's success rate from 4\% to 72\%, with a higher optimizing iteration leading to higher success rate. With a higher \mrpop iteration count, the seeding path becomes more optimized and leads to \curobo producing paths with shorter length and motion time. The path's dimensionless jerk~\cite{hogan2009sensitivity}, which is motion jerk normalized across time and displacement length, decreases alongside increasing \mrpop iteration. 

\subsection{Single-Robot Simulation Benchmarks \& Ablations}
For single robot benchmarks and ablations, \mrpop bypasses roadmap construction and directly plans in the robot's configuration space. This isolates the performance of \mrpop's bi-directional tree search in finding initial solutions and the \aox layer for path optimization.

As shown in Fig.~\ref{fig:fetch_init_time_success_cdf_kernel}, when each planner was given 20 seconds per problem before declaring planning failed, only \mrpop, \prrtc, and \aorrtc achieved a 100\% solve rate. \mrpop lagged behind \prrtc as \mrpop in the single robot scenario functions as a GPU-based RRT-Connect, with the additional \aox overhead for \asao planning. Overall, \mrpop achieved a faster time to initial solution for harder problems but lagged \aorrtc on easier problems due to \gpu launch and \io overheads.

For the path optimization benchmark, we only retain the anytime \asao planners (\mrpop, \aorrtc, \bitstar, and \rrtstar). As a single robot uses no roadmap layer, these results isolate the \aox design. As shown in Fig.~\ref{fig:fetch_time_vs_cost_own}, \mrpop converges to paths of lower cost orders of magnitude faster than the other planners, while also having a smaller interquartile range of path cost across problems compared to \aorrtc, indicating that \mrpop is not only faster in path optimization but is also more consistent.

\subsection{Hardware Deployment}
We deploy \mrpop on a 14 \dof dual-arm system, consisting of a 7 \dof Flexiv Rizon 4s and a 7 \dof Kinova Gen 3. As shown in Fig.~\ref{fig:hardware_comparison}, \mrpop's path optimality and effectiveness as a motion optimizer seeder transfer to a real-world setup. In Fig.~\ref{fig:chrono_realtime}, we further demonstrate \mrpop's ability to react in dynamic environments and plan optimized paths around obstacles in real time. Please refer to the supplemental video for more hardware deployment details and visualizations.
\iffigs
\begin{figure}[!t]
   \centering
   \vspace{1em}
   \includegraphics[width=0.48\textwidth]{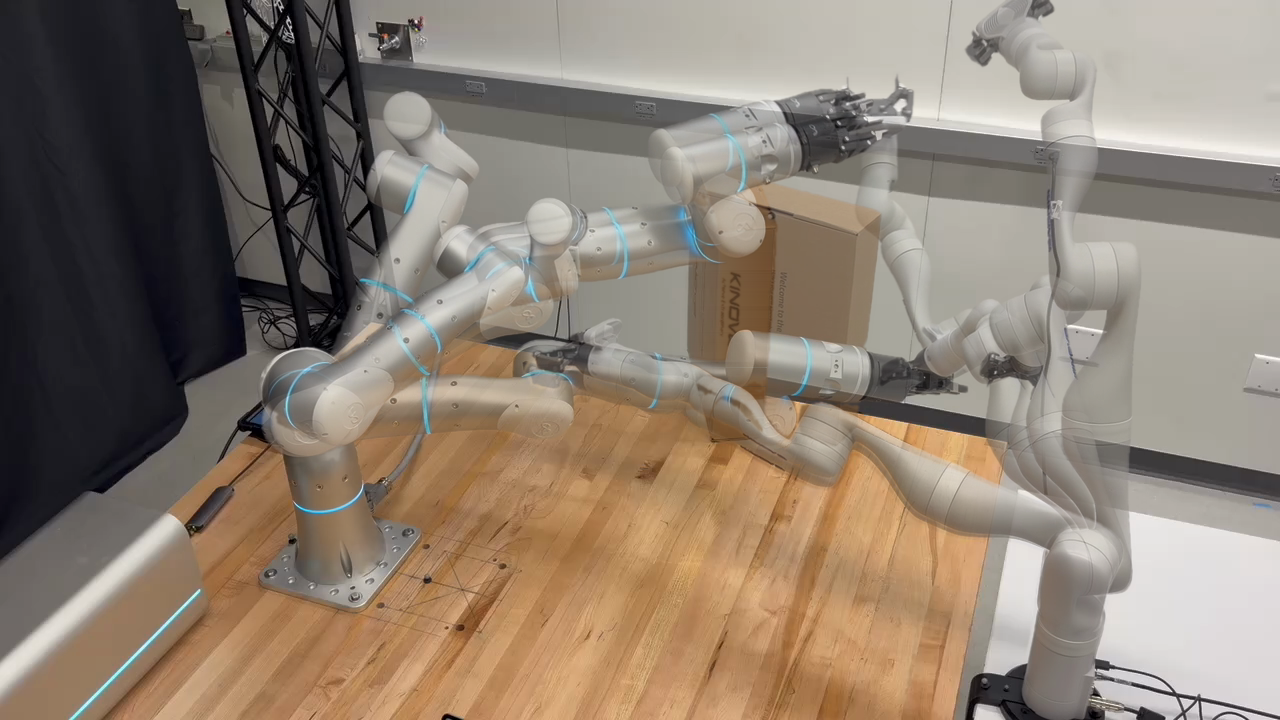}
   \vspace{-5pt}
   \caption{\mrpop replanning for a 14 \dof dual arm amid moving obstacles.}
   \label{fig:chrono_realtime}
   \vspace{-15pt}
\end{figure}
\fi

\section{Conclusion and Future Work} \label{sec:conclusion}
Finding globally optimal paths remains a fundamental challenge in multi-robot motion planning. Despite acceleration of almost-surely asymptotically optimal (\asao) planners via \cpu-based parallelism, achieving both probabilistic convergence guarantees and strong computational performance, these algorithms still struggle to scale to multi-robot settings. 
As such, we introduce \mrpop, a \gpu-based \asao multi-robot planner based on \drrt and the \aox meta-algorithm. 
\mrpop uses large-scale \gpu-based \simt-parallelism to simultaneously run hundreds of roadmap construction and tree search iterations with underlying parallel nearest neighbor search and collision checking operations.
We show that this enables \mrpop to become the only planner achieving a 100\% solve rate while being faster than state-of-the-art \asao planners in multi-robot systems up to 35-\dof. \mrpop also raises the success rate of downstream motion optimizers (e.g., from 4\% to 72\%), by creating high-quality, diverse seeds that help avoid local minima.

\section{Acknowledgment} \label{sec:acknowledgment}
We used generative AI tools, such as Codex and Claude Code, to help with software implementation, data visualization, and manuscript formatting. All final contents are modified and reviewed by humans.

\printbibliography

\end{document}